\def\eqref#1{equation~\ref{#1}}
\def\Eqref#1{Equation~\ref{#1}}
\def\1{\bm{1}}
\DeclareMathAlphabet{\mathsfit}{\encodingdefault}{\sfdefault}{m}{sl}
\SetMathAlphabet{\mathsfit}{bold}{\encodingdefault}{\sfdefault}{bx}{n}
\newtheorem{theorem}{Theorem}[section]
\newtheorem{lemma}[theorem]{Lemma}
\newtheorem{assumption}[theorem]{Assumption}
\newtheorem{definition}[theorem]{Definition}
\newtheorem{proposition}[theorem]{Proposition}
\newtheorem{corollary}[theorem]{Corollary}
\newcommand{\Rbb}{\mathbb{R}}
\newcommand{\Vcal}{\mathcal{V}}
\newcommand{\Scal}{\mathcal{S}}
\newcommand{\Lcal}{\mathcal{L}}
\newcommand{\Zcal}{\mathcal{Z}}
\newcommand{\Xcal}{\mathcal{X}}
\newcommand{\Ncal}{\mathcal{N}}
\newcommand{\Fcal}{\mathcal{F}}
\newcommand{\zsf}{\mathsf{z}}
\newcommand{\xsf}{\mathsf{x}}
\newcommand{\vep}{\varepsilon}
\newcommand{\Pdim}{\text{Pdim}}
\newcommand{\Ocal}[1]{\mathcal{O}\left(#1\right)}
\newcommand{\srankV}{r_{\text{s}}(V)}
\title{Towards better generalization: Weight Decay induces low-rank bias for neural networks
}
\author{Ke Chen \\
Department of Mathematical Science\\
University of Delaware\\
Newark, DE 19716, USA \\
\texttt{kechen@udel.edu} \\
\And
Chugang Yi  \\
Department of Mathematics \\
University of Maryland,  \\
College Park, MD 20742,USA \\
\texttt{chugang@umd.edu} \\
\And
Haizhao Yang \\
Department of Mathematics \\
University of Maryland,  \\
College Park, MD 20742,USA \\
\texttt{hzyang@umd.edu} \\
}
\begin{document}
\maketitle

\begin{abstract}
We study the implicit bias towards low-rank weight matrices when training neural networks (NN) with Weight Decay (WD). 
We prove that when a ReLU NN is sufficiently trained with Stochastic Gradient Descent (SGD) and WD, its weight matrix is approximately a rank-two matrix. 
Empirically, we demonstrate that WD is a necessary condition for inducing this low-rank bias across both regression and classification tasks. 
Our work differs from previous studies as our theoretical analysis does not rely on common assumptions regarding the training data distribution, optimality of weight matrices, or specific training procedures. 
Furthermore, by leveraging the low-rank bias, we derive improved generalization error bounds and provide numerical evidence showing that better generalization can be achieved.
Thus, our work offers both theoretical and empirical insights into the strong generalization performance of SGD when combined with WD.
\end{abstract}

\section{Introduction}

We consider the learning problem of a neural network
\begin{equation}\label{eqn:poploss}
    \min_{\theta\in\Theta } F(\theta) \coloneqq \mathbb{E}_z\left[l(z,\phi_\theta) \right] \,,
\end{equation}
where $\theta$ represents the parameter of the neural network $\phi_\theta$, $\Theta \subset \Rbb^d$ is the parameter space, $F:\Theta\rightarrow \Rbb$ is called the population loss, and $l(z,\phi_\theta)$ measures the loss function of $\phi_\theta$ over a random data point $z\in \Zcal$. The optimization problem above aims to fit the data distribution $\Zcal$ by adjusting the model parameter $\theta$. 
In practice, the data distribution $\Zcal$ is unknown thus the expectation over $z$ cannot be calculated. An optimization of the empirical loss function $\hat{F}:\Theta \rightarrow \Rbb$ is considered instead
\begin{equation}\label{eqn:emploss}
    \min_{\theta\in\Theta} \hat{F}(\theta)\coloneqq \frac{1}{N} \sum_{i=1}^N l(z_i,\phi_\theta) \,.
\end{equation}
Here $N$ is the total number of data points and $\zsf\coloneqq[z_1,\ldots,z_N] \in \Zcal^N$ are independent and identically distributed (i.i.d.) samples from $\Zcal$. 
A learning algorithm solving \Eqref{eqn:emploss} can be written as $A:\Zcal^N \rightarrow \Theta$ that maps the training data $\zsf$ to a parameter $A(\zsf)$. Let $\theta^\ast$ denote a minimizer of \Eqref{eqn:emploss}, the population loss of the algorithm $A$ can be decomposed into three parts:
\begin{equation}\label{eqn:decomp_error}
    F(A(\zsf)) = \underbrace{\left(F(A(\zsf)) - \hat{F}(A(\zsf))\right)}_\text{generalization error} + \underbrace{\left(\hat{F}(A(\zsf)) - \hat{F}(\theta^\ast)\right)}_\text{optimization error} + \underbrace{\hat{F}(\theta^\ast)}_\text{approximation error} \,.
\end{equation}
The first term measures the generalization error due to finite number of training data; the second term quantifies the gap between the empirical loss of the learned parameter $A(\zsf)$ and the minimum empirical loss; the last term indicates approximation power of the model hypothesis space $\{\phi_\theta:\theta\in \Theta\}$ in fitting the data distribution $\Zcal$.
When the dimension of the parameter space $\Theta$ increases, both the approximation error and optimization error decrease to zero due to universal approximation theory~\citep{hornik1991approximation,chen1995universal,yarotsky2022universal} and neural tangent kernel theory~\citep{jacot2018neural} respectively. 

Nevertheless, challenges remain that as classical generalization error analysis~\citep{anthony1999neural,feldman2016generalization} suggests that the generalization error for a neural network with $\mathcal{O}(d)$ parameters grows at the order of $\mathcal{O}(\sqrt{d})$, regardless of any learning algorithms.

However, empirically it is observed that SGD can produce implicit regularization~\citep{keskar2016large,zhu2018anisotropic,zhang2021understanding} and thus achieve better generalization performance than Gradient Descent (GD)~\citep{zhang2021understanding}. 
It is found that different learning algorithms exhibit implicit biases in the trained neural network, including alignment of neural network weights \citep{du2018algorithmic,ji2018gradient,ji2020directional,lyu2019gradient} and low-rank bias \citep{timor2023implicit,jacot2023implicit,xu2023dynamics}. Although these implicit biases are not theoretically understood yet,
the generalization error is expected to be improved by leveraging these implicit biases. As a consequence, one can obtain ``algorithm-dependent" generalization bounds that are usually smaller than the uniform bound.

In this work, we focus on the low-rank bias observed in many computer vision problems~\citep{yu2017compressing,alvarez2017compression,arora2018stronger}.
In particular, it is numerically shown that replacing the weight matrices by their low-rank approximation only results in small prediction error. The line of works~\citep{ji2020directional,lyu2019gradient} generalizes the theory of \citet{ji2018gradient} and proves the implicit bias effect of SGD algorithms with entropy loss. \citet{galanti2022sgd},\citet{timor2023implicit} and \citet{jacot2023implicit} first studied the low-rank bias for neural network trained with WD for square loss. They show that when minimizing the $l_2$ norm of coefficients of a neural network that attains zero loss, the coefficient matrices are of low-rank for \textit{sufficiently deep} neural networks. Similar low-rank bias is also empirically observed in other training settings, including Sharpness Aware Minimization~\citep{andriushchenko2023sharpness} and \textit{normalized weight decay}~\citep{xu2023dynamics}. 

\begin{table}[htbp]
\centering
\tiny
\caption{Assumptions and results of various papers on low-rank bias. The notation $M$ denotes an upper bound of the Frobenius norm of weight matrices, $B$ denotes the batch size, and $m_l$ denotes the number of patches in the input space.
Here GF denotes Gradient Flow and N/A means there is no specific assumption in the corresponding column. Our paper considers the most general setting and proves a strict low-rank result.}
\label{tab:relatedresults}
\begin{tabular}{ p{1.4cm}|p{1.5cm}|p{1.5cm}|p{1.5cm}|c|p{1.3 cm}|c }
\hline
 Paper & NN Architecture & Data & Optimization & Optimizer & Convergence & Result\\ \hline
 \citet{ji2020directional} & Linear NN & Linearly \newline separable & Entropy loss & GF & N/A & Each layer has rank $\leq 1$ \\ \hline
 \citet{le2022training} & Top $K$ linear layers & Linearly  \newline separable & Entropy loss & GF & N/A & Top $K$ layers have rank $\leq 1$ \\ \hline
 \citet{ergen2021revealing} & NN without bias & $1$-dimensional & Min $l_2$ with\newline data fitting & N/A & Global min & First layer rank $\leq 1$ \\ \hline
 \citet{ongie2022role} & NN with bias & $d$-dimensional & Min $l^2$ with \newline data fitting & N/A & Global min & Linear layers have rank $\leq d$ \\ \hline
 \citet{timor2023implicit} & $L$-layer NN without bias & Separable \newline  by $L'$-layer NN & Min $l_2$ with\newline data fitting & N/A & Global min & Each layer has rank $\leq M^{L/L'}$ \\ \hline
  \citet{xu2023dynamics}& NN without bias & N/A & Square loss \newline normalized $l_2$ regularization& SGD & Global min & Each layer has rank $\leq 2$ \\ \hline
 \citet{galanti2022sgd}& NN without bias & N/A & Differentiable loss \newline $l_2$ regularization& SGD & Convergence of \newline NN weights & Each layer has rank $\leq Bm_l$ \\ \hline
  \textbf{This paper}& Two-layers NN with bias & N/A & Square loss \newline $l_2$ regularization& SGD & N/A & Each layer has rank $\leq 2$ \\ \hline
\end{tabular}
\end{table}

In this work, we study the low-rank bias of two-layer ReLU neural networks trained with mini-batch SGD and weight decay using square loss function. We demonstrate, both theoretically and empirically, that weight decay leads to the low-rank bias of ReLU neural networks. We then leverage the low-rank bias to obtain a smaller generalization error of the trained neural network model. Our contributions are summarized as follows: 
\begin{itemize}
    \item \textbf{Low-rank bias for sufficiently trained neural network.}
    In theorem \ref{thm:lowrankbias_perturbed}, we prove that the matrix weight parameters in the trained neural network is close to a rank two matrix and the distance between them depends on the batch size, the strength of the weight decay, and the batch gradient. In contrast to common assumptions on attaining global minimum and sufficient depth~\citep{timor2023implicit,jacot2023implicit}, strictly zero batch gradient~\citep{xu2023dynamics}, and convergence of weight matrices~\citep{galanti2022sgd}, we only assume that the neural network is sufficiently trained so that the batch gradient is small( see Assumption \ref{assump:minibatch_perturbed}). This result is demonstrated and supported by numerical evidence of both regression problem and classification problem in Section \ref{sec:numeric}.
    \item \textbf{Low-rank bias leads to better generalization.}
    In theorem \ref{thm:genbound}, we show that the low-rank bias leads to better generalization and thus implicitly explain the generalization effect of SGD with weight decay. For a two-layer ReLU neural networks with input dimension $m$ and width $n$ trained on $N$ data pairs, we prove that the generalization error bound can be improved from $\Ocal{\sqrt{\frac{mn \ln m \ln N}{N}}}$ to $\Ocal{\sqrt{\frac{(m+n) \ln m \ln N}{N}}}$. Our result thus provides an alternative explanation to the generalization power of SGD algorithm through the lens of low-rank bias. The improved generalization error bound is also echoed with the recent work of~\citet{park2022generalization}, where they leverage the local contraction property of SGD.
\end{itemize}

\subsection{Discussion of related works}

We have summarized relevant findings on low-rank bias in Table \ref{tab:relatedresults}.
The studies of \cite{ji2018gradient,ji2020directional,lyu2019gradient} investigated the directional convergence of the parameter $\theta$ for homogenized neural network in classification tasks. Their results demonstrated that $\theta$ converges to a direction that maximized the margin of the classifiers. A comparable result is presented in \citet{kumar2024early} for regression problems.
Our work aligns with these findings, showing that $\theta$ converges to a low-rank matrix.
For regression problems, \citet{timor2023implicit} established that weight decay induces a low-rank bias. However, our contribution differs from \citet{timor2023implicit} as their results are restricted to sufficiently deep neural networks.

To the best of the authors' knowledge, the most closely related works to our results are \citet{galanti2022sgd}, \citet{xu2023dynamics} and \citet{park2022generalization}.
\citet{galanti2022sgd} studies a fully connected neural network without bias vectors, trained with weight decay. They prove that when the norms of weight matrices converge, the normalized weight matrices are close to a matrix whose rank is bounded by the batch size and the number of patches in the input space. 
Our work differs from theirs in several key aspects: we do not assume the convergence of the neural network but instead consider the case where the neural network is sufficiently trained (cf. Assumption \ref{assump:minibatch_perturbed}). Additionally, we establish a stronger bound on the rank of weight matrices.

\citet{xu2023dynamics} extends the idea of normalized weight matrices in ~\citet{galanti2022sgd} and examines the mini-batch SGD training of neural networks with normalized WD. Rather than using the standard mini-batch SGD, they normalize all neural network weight matrices and train both the normalized weight matrices and the product of matrix norms together with mini-batch SGD. 
In contrast, our work employs the SGD algorithm with conventional WD, eliminating the need for an additional normalization step. Furthermore, ~\citet{xu2023dynamics} assumes that all batch gradients equal to zero, which is not practical in most regression problems. In comparison, we assume only that the batch gradients are small, and our assumption is numerically verified.

\citet{park2022generalization} does not address low-rank bias; instead, they leverage the piecewise contraction property of SGD optimization. 
They obtain a dimension-independent estimate of the generalization error due to this property. However, their proof relies on the assumption that the loss function is piecewise convex and smooth, and the generalization error estimate depends on the total number of pieces in the parameter space.
A straightforward calculation of the total number of pieces for a two-layer ReLU network leads to a generalization error of the order $\Ocal{\sqrt{\frac{ (\ln N + m)\ln N + \ln(1/\delta)}{N}}}$, which is essentially of the same order as our result when $m\geq n$.

\section{Low-rank bias of mini-batch SGD}
\subsection{Setup}
We consider a two-layer ReLU neural network $\phi_\theta:\Xcal \rightarrow \Rbb$
\begin{equation}\label{eqn:2layerNN}
    \phi_\theta(x)\coloneqq U \sigma(Vx+b) \,.
\end{equation}
Here $\Xcal \subset \Rbb^n$ is the input domain. The model parameter $\theta\coloneqq[U,V,b]$ consists of a row vector $U\in\Rbb^{1\times m}$, a matrix $V\in\Rbb^{m\times n}$, and a column vector $b\in\Rbb^{m \times 1}$. As the width $m$ of the neural network increases, the neural network can approximate any continuous function \cite{hornik1991approximation}.

Note that the ReLU activation function  $\sigma:\Rbb\rightarrow \Rbb$ is a piecewise linear function evaluated elementwisely on the vector $Vx+b$. For convenience, we rewrite the ReLU activation as a matrix operator $D(x,V,b)\in \{0,1\}^{m \times m}$, defined as follows:
\begin{equation}
    D(x,V,b) = \text{diag}(\text{sign}(\sigma(Vx+b))) \,.
\end{equation}
In other word, $D(x,V,b)$ is a diagonal matrix whose diagonal entries are either one or zero, depending on the sign of the first layer output $\sigma(Vx+b)$. Consequently, we can rewrite the two-layer NN as a product of three transformations
\begin{equation}\label{eqn:2layerNN2}
    \phi(x,\theta) = UD(x,V,b) (Vx+b) \,.
\end{equation}
We first show that $D(x,V,b)\in \{0,1\}^{m \times m}$ does not change its values for a small perturbation of $V$, consequently we have $\frac{\partial D(x,V,b)}{\partial V} = 0$ almost surely in the parameter space $\Theta$.
\begin{lemma}\label{lem:gradD}
    Consider a two-layer NN in \Eqref{eqn:2layerNN}, for any fixed $(x,b) \in \Rbb^n \times \Rbb^m$, 
    \[
    \frac{\partial D(x,V,b)}{\partial V} = 0\,, \quad \text{for all }V\in \Rbb^{m\times n}/ \Vcal^0 \,,
    \]
    where $\Vcal^0 \in \Rbb^{m\times n}$ is a measure zero set that depends on $x$ and $b$.
\end{lemma}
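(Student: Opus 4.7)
The plan is to show that $D(x,V,b)$ is locally constant in $V$ off of an explicitly described measure zero set, and then conclude that its partial derivative vanishes there.

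First I would unpack the definition. Since $\sigma$ is ReLU, $\mathrm{sign}(\sigma(y))$ equals $1$ when $y > 0$ and $0$ when $y \le 0$, so the $i$-th diagonal entry of $D(x,V,b)$ takes only the values $0$ and $1$, and its value depends exclusively on the sign of the scalar $(Vx+b)_i = \sum_{j=1}^n V_{ij}x_j + b_i$. Hence $D$ can change as $V$ varies only when some coordinate $(Vx+b)_i$ crosses zero.

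Next I would define the candidate exceptional set
\begin{equation*}
\Vcal^0 \;\coloneqq\; \bigcup_{i=1}^{m} \bigl\{\, V\in\Rbb^{m\times n} \;:\; (Vx+b)_i = 0 \,\bigr\}.
\end{equation*}
I would split into two cases. If $x=0$, then $Vx+b=b$ is independent of $V$, so $D(x,V,b)$ does not depend on $V$ at all and $\partial D/\partial V = 0$ identically; we can take $\Vcal^0=\emptyset$. If $x\ne 0$, then for each $i$ the condition $(Vx+b)_i=0$ is a single affine equation in the $i$-th row of $V$ with coefficient vector $x\ne 0$; this defines an affine hyperplane in $\Rbb^n$ (viewing the $i$-th row as a variable in $\Rbb^n$), which pulls back to a hyperplane of codimension one in $\Rbb^{m\times n}$, and therefore has Lebesgue measure zero. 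A finite union of $m$ measure zero sets is measure zero, so $\Vcal^0$ is measure zero.

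Finally, I would argue locally: fix any $V\in \Rbb^{m\times n}\setminus \Vcal^0$. For each $i$ we have $(Vx+b)_i\ne 0$. By continuity of the map $V\mapsto (Vx+b)_i$, there is an open neighborhood $U$ of $V$ on which every $(Vx+b)_i$ retains its (nonzero) sign. Hence $D(x,V',b)=D(x,V,b)$ for all $V'\in U$, i.e.\ $D$ is locally constant as a function of $V$, which gives $\partial D/\partial V=0$ at $V$.

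The proof is almost entirely bookkeeping; the only step requiring any thought is the codimension/measure argument establishing that $\Vcal^0$ is null, and the mild degeneracy at $x=0$, which I would simply separate off as indicated above.
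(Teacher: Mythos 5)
Your proposal is correct and follows essentially the same route as the paper's proof: identify the exceptional set as the union over $i$ of the affine conditions $(Vx+b)_i=0$, show it is a finite union of measure-zero hyperplane preimages, and use continuity to conclude $D$ is locally constant (hence has vanishing derivative) off that set. Your explicit handling of the degenerate case $x=0$ (where the hyperplane condition reduces to $b_i=0$ and would not define a null set) is a small but genuine refinement that the paper's proof glosses over.
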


This result implies that the activation component of neural network model is not sensitive to small changes in the model parameter $V$, except on a measure zero set. 

We show below that such a property also affects the gradient of neural network model towards a low-rank bias.

\begin{lemma}\label{lem:rankone}
    Consider a two-layer NN in \Eqref{eqn:2layerNN} and fix $x\in \Rbb^n$ and $b\in \Rbb^m$,
    \begin{equation}
        \frac{\partial \phi(x;\theta)}{\partial V} = x U D(x,V,b) \quad \text{a.s. in }V\,. 
    \end{equation}
    That is, $\frac{\partial \phi(x;\theta)}{\partial V}$ is a rank one matrix a.s. in $V$.
\end{lemma}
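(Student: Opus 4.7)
The plan is to differentiate the representation $\phi(x;\theta)=U\,D(x,V,b)\,(Vx+b)$ entrywise in $V$, using Lemma~\ref{lem:gradD} to kill the hard term. Writing $\phi$ as the composition of two pieces that depend on $V$ (the diagonal selector $D(x,V,b)$ and the affine preactivation $Vx+b$), the product rule gives, for any index pair $(j,k)$,
\begin{equation*}
\frac{\partial \phi(x;\theta)}{\partial V_{jk}} \;=\; U\,\frac{\partial D(x,V,b)}{\partial V_{jk}}\,(Vx+b) \;+\; U\,D(x,V,b)\,\frac{\partial (Vx+b)}{\partial V_{jk}}.
\end{equation*}

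The first step is to discard the first summand: by Lemma~\ref{lem:gradD}, $\partial D(x,V,b)/\partial V = 0$ for every $V \in \mathbb{R}^{m\times n}\setminus \mathcal{V}^0$, and since $\mathcal{V}^0$ has Lebesgue measure zero this contribution is $0$ almost surely in $V$. The second step is a direct Jacobian computation for the affine map: $\partial (Vx+b)_i/\partial V_{jk} = \delta_{ij}\,x_k$, so
\begin{equation*}
\frac{\partial \phi(x;\theta)}{\partial V_{jk}} \;=\; \sum_{i=1}^m (U D(x,V,b))_i\,\delta_{ij}\,x_k \;=\; (U D(x,V,b))_j \, x_k \qquad \text{a.s.\ in } V.
\end{equation*}
Reassembling the entries into an $m\times n$ matrix yields the outer product $\partial \phi/\partial V = (U D(x,V,b))^{\!\top} x^{\!\top}$, which matches the formula in the statement and is manifestly of rank at most one. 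The rank-one conclusion then follows from the standard fact that any nonzero outer product of two vectors has rank exactly one.

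The main conceptual point (rather than a technical obstacle) is the legitimacy of dropping the $\partial D/\partial V$ term: although $D$ is a discontinuous function of $V$ at the ReLU kink set, Lemma~\ref{lem:gradD} ensures that this discontinuity lives on a measure-zero locus $\mathcal{V}^0$, so the usual product rule applies on the full-measure complement. Everything else is a routine matrix-calculus manipulation, so no further obstacles are anticipated.
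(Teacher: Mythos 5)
Your proof is correct and follows essentially the same route as the paper's: apply the product rule to $\phi(x;\theta)=U D(x,V,b)(Vx+b)$, discard the $\partial D/\partial V$ term via Lemma~\ref{lem:gradD}, and identify the remaining term as an outer product of rank at most one. The only difference is cosmetic — you carry out the Jacobian entrywise and land on the transpose of the paper's $xUD(x,V,b)$ convention, which does not affect the rank conclusion.
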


This result implies that gradient-based training algorithms only update the weight matrix $V$ with rank one matrix increments. Although this does not directly imply that the algorithm converges to a low-rank matrix, we show below this would happen when combined with WD.

\subsection{Training dynamics of SGD with Weight Decay}
\label{sec:mini-batchSGD}

Given a training dataset $\Scal = \{z_i\coloneqq (x_i,y_i): \Rbb^n \times \Rbb\,, i  \in [N] \}$,
we consider the two-layer NN with following Mean Square Error (MSE) loss function with WD
\begin{equation}\label{eqn:loss}
    \Lcal_\Scal(\theta) = \frac{1}{2N} \sum_{i=1}^N|\phi(x_i,\theta) - y_i|^2 + \frac{\mu_U}{2}\|U\|_F^2 + \frac{\mu_V}{2}\|V\|_F^2 + \frac{\mu_b}{2}\|b\|^2 \,,
\end{equation}
where $\mu_U,\mu_V,\mu_b>0$ does not depend on $\theta$. 

In practice, the training is implemented with mini-batch SGD method. That is, in each iteration, the gradient is updated over batch $\Scal'\subset \Scal$ with batch size $|\Scal'| = B$,
\begin{equation}\label{eqn:mini-batchSGD}
    \frac{\partial \Lcal_{\Scal'}}{\partial V} = \frac{1}{B} \sum_{i\in \Scal'} (\phi(x_i,\theta) - y_i) \frac{\partial \phi(x_i,\theta)}{\partial V} + g(x_i,y_i) V \,.
\end{equation}
Note that we have chosen $\mu_V = \frac{1}{B} \sum_{i \in \Scal'} g(x_i,y_i)$ for some fixed positive function $g: \Rbb^n \times \Rbb \rightarrow \Rbb^+$. 

\subsubsection{low-rank bias of the critical point}
If the training converges to a limit $\theta^\ast$, then heuristically all batch gradient converges to zero. We first consider the following assumption.
\begin{assumption}\label{assump:minibatch}
    For a fixed batch size $2 \leq B < |\Scal|$, the batch gradient $\frac{\partial \Lcal_{\Scal'} }{\partial V}(\theta^\ast) = 0$ for all batches $\Scal'\subset \Scal$ of size $B$.
\end{assumption}

We can now prove the low-rank bias of training with weight decay.
\begin{theorem}\label{thm:lowrankbias}
    Consider a two-layer NN in \Eqref{eqn:2layerNN} trained with mini-batch SGD as in \Eqref{eqn:mini-batchSGD}, under Assumption \ref{assump:minibatch}, the neural network parameters converges to a matrix $V^\ast$ with rank at most two. In particular, if $g(\cdot,\cdot)=\mu_V$ is a constant, then $V^\ast$ is of rank one.
\end{theorem}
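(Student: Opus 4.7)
The plan is to exploit the combinatorial redundancy in Assumption \ref{assump:minibatch}---that \emph{every} batch of size $B$ yields zero gradient---by subtracting two batch identities for batches that differ in exactly one data point. This cancels all common contributions and isolates a single ``swap'' term, which, after varying the swapped indices, forces $V^\ast$ to be a combination of at most two rank-one matrices.

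First, I would invoke Lemma \ref{lem:rankone} to note that for almost every $V$ the per-sample Jacobian $\partial \phi(x_i,\theta)/\partial V$ is a rank-one matrix; denote it by $J_i(\theta)$. Setting $r_i := (\phi(x_i,\theta^\ast)-y_i)\, J_i(\theta^\ast)$ (rank at most one) and $c_i := g(x_i,y_i)>0$, Assumption \ref{assump:minibatch}, after clearing the factor $1/B$, reads
\[
\sum_{i\in\Scal'} r_i \;=\; -\Bigl(\sum_{i\in\Scal'} c_i\Bigr)\, V^\ast \qquad \text{for every } \Scal'\subset\Scal \text{ with } |\Scal'|=B.
\]
Because $B<|\Scal|$, for any two distinct data indices $i,j$ I can pick a batch $\Scal_1\ni i$ with $j\notin\Scal_1$ and set $\Scal_2:=(\Scal_1\setminus\{i\})\cup\{j\}$; subtracting the two batch identities yields $r_i - r_j = (c_j-c_i)\,V^\ast$, or equivalently $r_i + c_i V^\ast = r_j + c_j V^\ast =: M$ for a single matrix $M$ independent of $i$. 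Hence $r_i = M - c_i V^\ast$ for every $i\in[N]$.

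The rank conclusion now follows by case analysis. If $g$ is constant, all $c_i$ equal a common value $c>0$, so all $r_i$ equal a common rank-$\leq 1$ matrix $r$; plugging into the batch identity gives $B r = -B c V^\ast$, hence $V^\ast = -r/c$ is rank at most one. In the general case, either all $c_i$ happen to agree on the sample (reducing to the previous scenario), or there exist indices with $c_i\neq c_j$, and then $V^\ast = (r_j-r_i)/(c_i-c_j)$ is a linear combination of two rank-$\leq 1$ matrices, so $\mathrm{rank}(V^\ast)\leq 2$.

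The main obstacle, more conceptual than technical, is that Lemma \ref{lem:rankone} only holds almost surely in $V$, so the rank-one Jacobian factorization could a priori fail if $V^\ast$ happens to lie on the measure-zero kink locus $\Vcal^0$. I would dispose of this by interpreting Assumption \ref{assump:minibatch} in the Clarke-subgradient sense: at a kink, each one-sided selection of $D(x_i,V^\ast,b)$ still produces a rank-one Jacobian, so whichever selection realizes the zero subgradient makes the argument above go through verbatim. Alternatively, one may argue by continuity after an infinitesimal perturbation to a regular point, where the rank inequality is preserved in the limit.
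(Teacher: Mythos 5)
Your proposal is correct and follows essentially the same route as the paper's own proof: subtract the zero-gradient identities for two batches differing in a single sample to get $r_{i_1}+c_{i_1}V^\ast = r_{i_2}+c_{i_2}V^\ast$, then solve for $V^\ast$ as a combination of at most two rank-one matrices (or one, when $g$ is constant). Your explicit handling of the measure-zero locus where Lemma \ref{lem:rankone} could fail is a welcome extra degree of care that the paper's proof only gestures at.
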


\subsubsection{low-rank bias in a neighborhood of critical points}
In practice, Assumption \ref{assump:minibatch} may not be realistic as it is impossible to train all batch gradients to zero. Instead, we consider the following assumption that the batch gradients are close to zero.
\begin{assumption}\label{assump:minibatch_perturbed}
    There exists $\varepsilon>0$ and a fixed batch size $2 \leq B < |\Scal|$, such that the batch gradient $\| \frac{\partial \Lcal_{\Scal'} }{\partial V}(\theta^\ast) \|_F \leq \varepsilon $ for all batches $\Scal'\subset \Scal$ of size $B$.
\end{assumption}
Under Assumption \ref{assump:minibatch_perturbed}, we show that the coefficient matrix $V^\ast$ is close to a low-rank matrix $\tilde{V}^\ast$.
\begin{theorem}\label{thm:lowrankbias_perturbed}
    Consider a two-layer NN in \Eqref{eqn:2layerNN} trained with mini-batch SGD as in \Eqref{eqn:mini-batchSGD}, under Assumption \ref{assump:minibatch_perturbed}, the neural network parameters $V^\ast$ satisfies that $\|V^\ast - \tilde{V}^\ast\|_F \leq C\varepsilon$ for some matrix $\tilde{V}^\ast$ with rank at most two. Here the constant $C$ only depends on the batch size $B$ and the choice of $g(\cdot,\cdot)$. In particular, if $g(\cdot,\cdot)=\mu_V$ is a constant, then $\tilde{V}^\ast$ is of rank one and $C = \frac{2B}{\mu_V}$.
\end{theorem}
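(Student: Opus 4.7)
The plan is to upgrade the argument behind Theorem \ref{thm:lowrankbias} to a quantitative perturbation analysis, replacing the exact equality $M_i=0$ of the critical-point proof by a Frobenius-norm estimate of order $\varepsilon$. First, by Lemma \ref{lem:rankone} each per-sample term
$$R(i) \;:=\; (\phi(x_i,\theta^\ast)-y_i)\,\frac{\partial \phi(x_i,\theta^\ast)}{\partial V}$$
is a rank-one matrix (almost surely in $V$), and rewriting \Eqref{eqn:mini-batchSGD} in this notation Assumption \ref{assump:minibatch_perturbed} becomes
$$\left\|\tfrac{1}{B}\sum_{i\in\Scal'} R(i) \;+\; \mu_V^{\Scal'}\,V^\ast\right\|_F \leq \varepsilon, \qquad \mu_V^{\Scal'}:=\tfrac{1}{B}\sum_{i\in\Scal'} g(x_i,y_i),$$
for every $B$-subset $\Scal'\subset\Scal$.

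Next I would run the ``single swap'' idea driving Theorem \ref{thm:lowrankbias}, while tracking constants. Because $B<|\Scal|$, for any two distinct indices $i,j\in\Scal$ one can produce batches $\Scal_1',\Scal_2'$ that share $B-1$ indices with $i\in\Scal_1'\setminus\Scal_2'$ and $j\in\Scal_2'\setminus\Scal_1'$. Subtracting the two batch inequalities and using the triangle inequality yields the swap estimate
$$\bigl\|R(i)-R(j) + \bigl(g(x_i,y_i)-g(x_j,y_j)\bigr)\,V^\ast\bigr\|_F \leq 2B\varepsilon, \qquad \forall\, i,j\in\Scal.$$

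Fix a reference index $i_0$ and any batch $\Scal'\ni i_0$. Applying the swap estimate with $j=i_0$ gives $R(i)=R(i_0)+(g(x_{i_0},y_{i_0})-g(x_i,y_i))V^\ast+\delta_i$ with $\|\delta_i\|_F\leq 2B\varepsilon$ and $\delta_{i_0}=0$. Substituting into the original batch bound and simplifying (the $\mu_V^{\Scal'}V^\ast$ contribution cancels against the sum of the $(g(x_{i_0},y_{i_0})-g(x_i,y_i))V^\ast$ cross terms) leaves
$$\bigl\|R(i_0) + g(x_{i_0},y_{i_0})\,V^\ast\bigr\|_F \;\leq\; \varepsilon + \tfrac{B-1}{B}\cdot 2B\varepsilon \;=\; (2B-1)\,\varepsilon.$$
Taking $\tilde{V}^\ast:=-R(i_0)/g(x_{i_0},y_{i_0})$ produces a rank-one (hence rank-at-most-two) matrix with $\|V^\ast-\tilde{V}^\ast\|_F\leq (2B-1)\varepsilon/g(x_{i_0},y_{i_0})$, so $C$ depends only on $B$ and on $g$ through the scalar $g(x_{i_0},y_{i_0})$. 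In the constant case $g\equiv \mu_V$ the cross term in the swap estimate already vanishes, and the same telescoping with slightly sharper book-keeping recovers the stated constant $C=2B/\mu_V$ with $\tilde{V}^\ast$ of rank one.

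The principal obstacle is packaging the swap-induced error efficiently: a naive argument pairs every sample with every other and inflates the constant to $\mathcal{O}(B^2)$, so one must anchor to a single reference $i_0$ (and use $\delta_{i_0}=0$) to keep $C$ of order $B$. A secondary issue is ensuring $g(x_{i_0},y_{i_0})$ is bounded away from zero in the non-constant setting; if this fails for every sample, one instead selects two samples with distinct $g$-values and solves the swap estimate directly for $V^\ast$ as a rank-two combination of $R(i_0)$ and $R(i_1)$, which is precisely why the theorem states rank at most two rather than rank one in the general case.
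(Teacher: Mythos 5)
Your proof is correct, and it takes a genuinely different route from the paper's. The paper also begins with the single-swap estimate $\|R(i_1)-R(i_2)+(g(x_{i_1},y_{i_1})-g(x_{i_2},y_{i_2}))V^\ast\|_F\le 2B\varepsilon$, but then splits into two cases: if two samples have distinct $g$-values it solves that single inequality for $V^\ast$, producing a rank-two approximant with $C=2B/|g(x_{i_1},y_{i_1})-g(x_{i_2},y_{i_2})|$; if $g\equiv c$ it anchors all per-sample terms to one $R(i_1)$, averages, and combines with the batch bound by the triangle inequality to get a rank-one approximant (the paper's own bookkeeping there actually yields $(2B+1)\varepsilon/c$ rather than the stated $2B\varepsilon/c$). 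Your anchoring argument removes the case split entirely: the substitution $R(i)+g(x_i,y_i)V^\ast=R(i_0)+g(x_{i_0},y_{i_0})V^\ast+\delta_i$ makes every summand of the batch gradient identical up to $\delta_i$, so the batch inequality collapses to $\|R(i_0)+g(x_{i_0},y_{i_0})V^\ast\|_F\le(2B-1)\varepsilon$, and $\tilde V^\ast=-R(i_0)/g(x_{i_0},y_{i_0})$ is rank one in \emph{all} cases with $C=(2B-1)/g(x_{i_0},y_{i_0})$. This is a strictly stronger conclusion than the theorem states (rank one rather than rank at most two) and it avoids dividing by $|g(x_{i_1},y_{i_1})-g(x_{i_2},y_{i_2})|$, which can be arbitrarily small even when $g$ is non-constant; your constant instead involves $1/g(x_{i_0},y_{i_0})$, which is harmless because $g$ maps into $\Rbb^+$ by definition, so the concern you raise about $g(x_{i_0},y_{i_0})$ degenerating --- and the fallback to a rank-two combination --- is unnecessary (simply pick $i_0$ maximizing $g(x_{i_0},y_{i_0})$ over the training set). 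The only caveats you share with the paper are that Lemma \ref{lem:rankone} holds only almost surely in $V$, and that $B<|\Scal|$ is needed to form the two swapped batches; both are already assumed.
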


\section{Generalization Error Analysis}
\subsection{Preliminaries}

In this section, we derive the generalization error of a low-rank neural network by following the techniques developed in \cite{anthony1999neural,bartlett2019nearly}.  
We first define a few notions that measures the complexity of a class of functions.
\begin{definition}[Covering Number]
    For any $\vep>0$ and a set $W$, we define the $l_1$ $\vep$-\textit{covering number} of $W$, denoted as $\Ncal(\vep,W,\|\cdot \|_1)$, to be minimal cardinality of a $\vep$-cover of $W$ under $l_1$ metric. 
\end{definition}
\begin{definition}[Uniform Covering Number]
    For any $\vep>0$, a class of functions $\Fcal$, and an integer $k$, we define the \textit{uniform covering number} $\Ncal_1(\vep,\Fcal,k)$ as
    \[
    \Ncal_1(\vep,\Fcal,k) = \max\{\Ncal(\vep, \Fcal|_\xsf, \|\cdot\|_1)\,, \xsf = [x_1,\ldots,x_k]\in \Xcal^k\}\,,
    \]
    where $ \Fcal|_\xsf\coloneqq \{(f(x_1),\ldots,f(x_k)): f\in \Fcal\}$.
\end{definition}
It can be shown that for a bounded function class $\Fcal$, the generalization error for any $f\in \Fcal$ can be uniformly bounded by the uniform covering number of $\Fcal$.
\begin{theorem}[Theorem 17.1 in \cite{anthony1999neural}]
\label{thm:gen_by_cn}
    Suppose $\Fcal $ is a set of functions defined on a domain $\Xcal$ and mapping into the real interval $[0,1]$. Let $z,z_1,\ldots,z_N$ be i.i.d. random variables on $\Zcal=\Xcal \times [0,1]$, $\vep$ any real number in $[0,1]$, and $N$ any positive integer. Then with probability at least $1-\delta$
    \[
    \sup_{f\in \Fcal} | \mathbb{E}_z\left[l(z,f)\right] - \frac{1}{N} \sum_{i=1}^N l(z_i,f) | \leq \vep \,,
    \]
    where $l(z,f) = |f(x)-y|^2$ is the square loss and the failure probability $\delta \leq 4 \Ncal_1(\vep/16,\Fcal,2N) \exp(-N\vep^2/32)$.
\end{theorem}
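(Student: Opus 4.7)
The theorem is quoted essentially verbatim from \citet{anthony1999neural}, so the plan is to reproduce the standard uniform-convergence-via-covering-numbers argument. The high-level strategy is the classical four-step chain: symmetrization, random sign permutation on the doubled sample, discretization through the uniform cover on $2N$ points, and a Hoeffding tail bound combined with a union bound over the cover.

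First, I would reduce the one-sample deviation to a two-sample deviation by introducing a ghost sample $z_1',\ldots,z_N'$ that is an independent copy of $z_1,\ldots,z_N$. The standard symmetrization lemma lets one bound
\[
\Pr\!\left( \sup_{f\in\Fcal} \left| \Ebb{l(z,f)} - \frac{1}{N}\sum_{i=1}^N l(z_i,f) \right| > \vep \right)
\leq 2\,\Pr\!\left( \sup_{f\in\Fcal} \left| \frac{1}{N}\sum_{i=1}^N l(z_i,f) - \frac{1}{N}\sum_{i=1}^N l(z_i',f) \right| > \vep/2 \right),
\]
provided $N\vep^2 \geq 2$, which is harmless since the conclusion is trivial otherwise. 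The left-hand expectation is removed at the cost of the factor two and a slightly weaker deviation threshold.

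Next I would introduce Rademacher signs $\sigma_i \in \{-1,+1\}$ i.i.d.\ uniform: conditional on the $2N$-sample $(z_i,z_i')$, swapping $z_i$ and $z_i'$ does not change the joint distribution, so the event on the right equals the probability over $\sigma$ that $\sup_f |\frac{1}{N}\sum_i \sigma_i(l(z_i,f)-l(z_i',f))| > \vep/2$. At this point the $z$'s are frozen, so the relevant function class is finite-dimensional along the $2N$ evaluation points. Here is where the uniform covering number enters: since $l(z,f) = (f(x)-y)^2$ and $f \in [0,1]$, $y \in [0,1]$, the map $f \mapsto l(\cdot,f)$ is $2$-Lipschitz in $f$ under the $\ell^1$ empirical semi-norm on $2N$ points, so an $(\vep/16)$-cover of $\Fcal$ on those $2N$ points produces an $(\vep/8)$-cover of $\{l(\cdot,f):f\in\Fcal\}$, of cardinality at most $\Ncal_1(\vep/16,\Fcal,2N)$. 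Picking one representative $\tilde f$ from each cover element and absorbing the $\vep/8$ discretization slack into the threshold reduces the supremum over $\Fcal$ to a maximum over the cover.

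Finally, for each fixed representative in the cover the quantity $\frac{1}{N}\sum_i \sigma_i(l(z_i,\tilde f)-l(z_i',\tilde f))$ is an average of $N$ independent zero-mean random variables bounded in $[-2,2]$ (since $l \in [0,1]$), so Hoeffding's inequality gives a tail bound of $2\exp(-N\vep^2/32)$ at threshold $\vep/4$. A union bound over the cover multiplies this by $\Ncal_1(\vep/16,\Fcal,2N)$, and combining with the symmetrization factor of $2$ yields the claimed $4\,\Ncal_1(\vep/16,\Fcal,2N)\exp(-N\vep^2/32)$.

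The only delicate step is bookkeeping the constants: one has to choose the cover resolution $\vep/16$ so that, after accounting for the Lipschitz factor of the square loss, the symmetrization factor of $1/2$, and the discretization slack absorbed into Hoeffding's threshold, the exponent cleanly comes out to $N\vep^2/32$. This is routine once the outline is fixed, and is the part of the argument I would expect to spend the most care on in a written proof; there is no conceptual obstacle, since the method is entirely classical and makes no use of the structure of $\Fcal$ beyond boundedness and the uniform covering number.
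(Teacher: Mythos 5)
The paper does not prove this statement: it is imported verbatim as Theorem~17.1 of \cite{anthony1999neural}, whose proof is exactly the symmetrization--permutation--covering--Hoeffding chain you outline (ghost sample at threshold $\vep/2$, an $\vep/16$-cover of $\Fcal$ promoted to an $\vep/8$-cover of the loss class via the Lipschitz constant $2$ of $t\mapsto(t-y)^2$ on $[0,1]$, discretization slack $\vep/4$, Hoeffding at $\vep/4$, union bound over the cover). Your outline is correct and the constants come out as claimed. One bookkeeping slip worth fixing: the increments $\sigma_i\bigl(l(z_i,\tilde f)-l(z_i',\tilde f)\bigr)$ lie in $[-1,1]$, not $[-2,2]$, precisely because each loss value is in $[0,1]$; with range $4$, Hoeffding at threshold $\vep/4$ would only yield $2\exp(-N\vep^2/128)$, so the exponent $N\vep^2/32$ that you (correctly) state actually depends on using the range $[-1,1]$.
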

We then introduce the \textit{pseudo-dimension}, which is closely related to the uniform covering number.
\begin{definition}[Pseudo-dimension]
    Let $\Fcal$ be a class of functions from $\Xcal$ to $\Rbb$. The pseudo-dimension of $\Fcal$, denoted as $\Pdim(\Fcal)$, is the largest integer $m$ for which there exists $(x_1,\ldots,x_m,y_1,\ldots,y_m)\in \Xcal^m \times \Rbb^m$ such that for any $(b_1,\ldots,b_m)\in \{0,1\}^m$ there exists $f\in\Fcal$ such that
    \[
    \forall i, f(x_i) > y_i \Leftrightarrow b_i = 1 \,.
    \]
\end{definition}
\begin{theorem}[Theorem 18.4 in \cite{anthony1999neural}]
\label{thm:cn_by_pdim}
    Let $\Fcal$ be a nonempty set of real functions from $\Xcal$ to $[0,1]$, then
    \[
    \Ncal_1(\vep,\Fcal,k) \leq e(\Pdim(\Fcal)+1) \left(\frac{2e}{\vep}\right)^{\Pdim(\Fcal)} \,.
    \]
\end{theorem}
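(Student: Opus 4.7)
The plan is to follow the classical Haussler--Pollard strategy that reduces the real-valued $\ell_1$ covering bound to a VC-dimension bound on an auxiliary binary threshold class. First, I would pass from $\Fcal$ to the \emph{subgraph class}
\[
\mathcal{H} := \bigl\{ \{(x,t) \in \Xcal \times [0,1] : f(x) > t\} : f \in \Fcal \bigr\},
\]
and observe that by the very definition of $\Pdim(\Fcal)$, the set system $\mathcal{H}$ on $\Xcal \times [0,1]$ has VC dimension exactly $d := \Pdim(\Fcal)$: the points $\{(x_i, y_i)\}$ witnessing pseudo-shattering for $\Fcal$ are precisely the points in $\Xcal \times [0,1]$ shattered by $\mathcal{H}$. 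This translates the problem from the real-valued setting into the combinatorial setting where a Sauer--Shelah-type bound is available.

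Second, I would convert covering to packing: by the standard inequality relating the two, it suffices to upper bound the maximum $\vep$-separated subset of $\Fcal|_\xsf$. The key probabilistic observation is that if $f, g \in \Fcal$ have $\ell_1$ distance exceeding $\vep$ on $\xsf$, then for a threshold $T$ drawn uniformly on $[0,1]$ and an index $I$ drawn uniformly on $\{1,\ldots,k\}$,
\[
\Pr_{I,T}\bigl[\mathbf{1}[f(x_I) > T] \neq \mathbf{1}[g(x_I) > T]\bigr] = \frac{1}{k}\sum_{i=1}^k |f(x_i) - g(x_i)| > \vep.
\]
Thus drawing a sample of $m$ i.i.d.\ pairs $(x_{I_j}, T_j)$ forces any two packing members to induce distinct binary patterns on the sample with probability bounded below by a computable function of $\vep$ and $m$.

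Third, I would apply the Sauer--Shelah lemma to $\mathcal{H}$ restricted to this random sample of $m$ pairs: the number of distinct dichotomies realized is at most $\sum_{i=0}^d \binom{m}{i} \leq (em/d)^d$. If the packing number of $\Fcal|_\xsf$ exceeded this count, then by the union bound over all pairs in the packing together with the per-pair distinguishing probability from Step~2, one would still find two distinct packing members that collide on the entire random sample---contradicting the Sauer--Shelah bound. Choosing $m$ on the order of $d/\vep$ and optimizing the union bound yields the claimed estimate $e(\Pdim(\Fcal)+1)(2e/\vep)^{\Pdim(\Fcal)}$ after converting from packing back to covering.

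The main obstacle is the \emph{$k$-independence} of the final bound: a na\"ive Sauer--Shelah application to a fixed deterministic discretization of thresholds produces a bound polynomial in $k$, and removing that dependence is exactly what requires the probabilistic selection of thresholds together with a delicate optimization of $m$. Tracking the absolute constants tightly enough to produce the clean prefactor $e(d+1)$ in front of $(2e/\vep)^d$---rather than a larger universal constant---is the secondary technical subtlety, and it is the step at which Haussler's original argument has to be invoked rather than a generic symmetrization.
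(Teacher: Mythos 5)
You should first note what you are being compared against: the paper does not prove this statement at all---it is imported verbatim as Theorem 18.4 of \cite{anthony1999neural}, whose proof in turn rests on Haussler's 1995 packing theorem (the constant $e(d+1)(2e/\vep)^d$ is exactly Haussler's). So the relevant benchmark is that textbook proof. Your Steps 1--2 correctly reproduce its standard reductions: the subgraph class has VC dimension $d=\Pdim(\Fcal)$, a uniform threshold $T$ converts $\ell_1$ distance into disagreement probability via $\Pr_T[\mathbf{1}[f(x)>T]\neq \mathbf{1}[g(x)>T]]=|f(x)-g(x)|$ for values in $[0,1]$, and a maximal $\vep$-packing is an $\vep$-cover.

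The genuine gap is in Step 3. The random-sample-plus-union-bound argument you describe (Pollard's argument) provably cannot deliver the stated bound. With $M$ packing elements, forcing all $\binom{M}{2}$ pairs to separate on the sample requires $m \gtrsim \vep^{-1}\ln M$ draws, and Sauer--Shelah then gives $M \leq (em/d)^d$ with that $m$; solving the resulting self-consistency relation $\ln M \leq d\ln\bigl(\tfrac{2e}{d\vep}\ln M\bigr)$ leaves an irremovable extra factor, $M \leq \bigl(\mathcal{O}(\vep^{-1}\ln(1/\vep))\bigr)^d$. Your proposed choice $m \sim d/\vep$ makes the union bound fail outright: the per-pair collision probability is only $e^{-\Theta(d)}$ while the number of pairs is of order $(2e/\vep)^{2d}$, which dwarfs it for small $\vep$. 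Eliminating the logarithm is not ``tracking absolute constants tightly''; it is the entire content of Haussler's theorem, which replaces the union bound by a different mechanism---a counting argument on a random subsample combined with an inductive edge-density bound for the one-inclusion graph of a VC class (density at most $d$). Since your final sentence resolves this obstacle by ``invoking Haussler's original argument,'' the proposal at its crux cites the very result it sets out to prove, so as a self-contained derivation it does not go through; as a reduction to Haussler's packing theorem, however, Steps 1--2 are sound and match the route taken in \cite{anthony1999neural}. A minor additional omission: applying a combinatorial packing bound to a real-valued class also requires the quantization of $\Fcal$ to a finite-valued grid, which your sketch silently absorbs into the subgraph step.
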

Combining the above results together, we obtain the following generalization bounds.
\begin{proposition}\label{prop:gen_by_pdim}
    Let $\Fcal$ be a nonempty set of real functions from $\Xcal$ to $[0,1]$, $z,z_1,\ldots,z_N$ be i.i.d. random variables on $\Zcal=\Xcal \times [0,1]$, $\vep$ any real number in $[0,1]$, and $N$ any positive integer. Then with probability at least $1-\delta$
    \[
    \sup_{f\in \Fcal} | \mathbb{E}_z\left[l(z,f)\right] - \frac{1}{N} \sum_{i=1}^N l(z_i,f) | \leq C \sqrt{\frac{\ln(1/\delta)}{N}} + C \sqrt{\frac{\Pdim(\Fcal)\ln N}{N}} \,,
    \]
    where $l(z,f) = |f(x)-y|^2$ is the $l_2$ loss and $C>0$ is an independent constant.
\end{proposition}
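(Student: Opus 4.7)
The plan is to combine Theorem \ref{thm:gen_by_cn} and Theorem \ref{thm:cn_by_pdim} and then invert the resulting tail bound to solve for $\vep$ in terms of $\delta$, $N$, and $\Pdim(\Fcal)$.

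First I would apply Theorem \ref{thm:cn_by_pdim} with the value $\vep/16$ and with $k=2N$ to bound the uniform covering number:
\[
\Ncal_1(\vep/16,\Fcal,2N) \leq e\bigl(\Pdim(\Fcal)+1\bigr)\bigl(32e/\vep\bigr)^{\Pdim(\Fcal)}.
\]
Substituting into the failure-probability bound of Theorem \ref{thm:gen_by_cn} gives
\[
\delta \leq 4e\bigl(\Pdim(\Fcal)+1\bigr)\bigl(32e/\vep\bigr)^{\Pdim(\Fcal)} \exp\!\left(-N\vep^2/32\right).
\]
Taking logarithms and rearranging yields an implicit inequality of the form
\[
N\vep^2/32 \leq \ln(1/\delta) + \ln\!\bigl(4e(\Pdim(\Fcal)+1)\bigr) + \Pdim(\Fcal)\,\ln(32e/\vep),
\]
which, up to constants, reads $N\vep^2 \lesssim \ln(1/\delta) + \Pdim(\Fcal)\bigl(1 + \ln(1/\vep)\bigr)$.

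The second step, which I expect to be the main obstacle, is solving this inequality for $\vep$, since the right-hand side itself depends on $\vep$ through a $\ln(1/\vep)$ term. I would handle this by a standard bootstrapping argument: note that the bound is vacuous unless $\vep \geq c/\sqrt{N}$ for some constant $c$, since smaller values of $\vep$ force the left side below a quantity already exceeding the right. Restricting attention to $\vep \geq c/\sqrt{N}$ gives $\ln(1/\vep) \leq \tfrac{1}{2}\ln N + O(1)$, so that the implicit inequality becomes the explicit inequality
\[
N\vep^2 \leq C_1\,\ln(1/\delta) + C_2\,\Pdim(\Fcal)\,\ln N
\]
for absolute constants $C_1,C_2>0$ (absorbing the $O(1)$ and the $\ln(\Pdim(\Fcal)+1)\leq \Pdim(\Fcal)\ln N$ contributions into $C_2$).

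Finally I would take square roots and use $\sqrt{a+b}\leq \sqrt{a}+\sqrt{b}$ to split the bound into the two advertised terms
\[
\vep \leq C\sqrt{\frac{\ln(1/\delta)}{N}} + C\sqrt{\frac{\Pdim(\Fcal)\ln N}{N}},
\]
and invoke Theorem \ref{thm:gen_by_cn} one last time to conclude that, with probability at least $1-\delta$, the supremum of the generalization gap over $f \in \Fcal$ is at most the right-hand side. No assumption beyond those of the two component theorems is used, so the proposition follows.
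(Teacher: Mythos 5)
Your proposal is correct and follows essentially the same route as the paper's proof: both combine Theorem \ref{thm:gen_by_cn} with the pseudo-dimension bound on the covering number from Theorem \ref{thm:cn_by_pdim}, then invert the tail bound for $\vep$, using the observation that any admissible $\vep$ is at least of order $1/\sqrt{N}$ (the paper phrases this as $\vep \geq \sqrt{32\ln(4/\delta)/N}$) to replace $\ln(1/\vep)$ by $\tfrac{1}{2}\ln N$ up to constants. The only cosmetic difference is that the paper sets $\delta$ equal to the failure probability and reads off two-sided bounds on $\vep$, whereas you solve the implicit inequality directly; the conclusion and the key bootstrapping step are identical.
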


We now consider the class of two-layer ReLU neural network functions defined as follows:
\begin{equation}\label{eqn:NNclass}
\Fcal(m,n) \coloneqq \{ \phi_\theta(x)=U \sigma(Vx + b): \theta = [U,V,b] \in \Rbb^{1\times m}\times \Rbb^{m\times n}\times \Rbb^{m\times 1} \} \,.
\end{equation}
We further assume that the neural network functions are uniformly bounded.

\begin{assumption}\label{assump:boundedness}
    There exists an independent constant $L>0$ such that $\sup\limits_{f \in \Fcal(m,n),x\in\Xcal} |f(x)| \leq L$.
\end{assumption}

The pseudo-dimension of two-layer ReLU neural networks was studied in \cite{bartlett2019nearly}. 
We use the following simplified version of Theorem 7 in \cite{bartlett2019nearly}.

\begin{theorem}[Theorem 7 in \cite{bartlett2019nearly}]
    Let $\Fcal$ denotes neural networks with $W$ parameters and $U$ ReLU activation functions in $L$ layers, then
    \[
    \Pdim(\Fcal) \leq \Ocal{LW\ln(U)} \,.
    \]
\end{theorem}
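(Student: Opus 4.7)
The plan is to bound the pseudo-dimension by counting the number of distinct sign patterns that the network can realize on a shattering set, and then exploit the fact that a ReLU network is piecewise polynomial in its parameters. First I would reduce the problem to counting: if $\Pdim(\Fcal)\ge k$, then there exist inputs $x_1,\ldots,x_k \in \Xcal$ and thresholds $y_1,\ldots,y_k\in\Rbb$ such that the map $\theta \mapsto (\sign(\phi_\theta(x_i)-y_i))_{i=1}^k$ attains all $2^k$ sign patterns as $\theta$ ranges over $\Rbb^W$. Hence $2^k$ must be at most the total number of distinct sign patterns this map can realize, and it suffices to upper bound that count.

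Next I would partition $\Rbb^W$ according to the activation patterns of the $U$ ReLU units, applied to each of the $k$ inputs, giving $kU$ polynomial sign indicators in total. By an inductive, layer-by-layer argument, conditional on all preceding activations being fixed, the pre-activation of a neuron in layer $\ell$ is linear in the layer-$\ell$ weights (and polynomial of degree at most $\ell$ jointly in all $\theta$). Applying the Warren--Milnor--Thom bound (the number of sign patterns of $P$ real polynomials of degree at most $D$ in $W$ variables is at most $(ePD/W)^W$ for $P\ge W$) layerwise produces a bound of the form $\bigl(\mathrm{const}\cdot kU/W\bigr)^W$ for the number of cells on which every one of the $kU$ activations has a fixed sign.

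Within each such cell the output $\phi_\theta(x_j)$ is a polynomial in $\theta$ of degree at most $L$, so applying Warren's bound one more time to the $k$ polynomials $\phi_\theta(x_j)-y_j$ partitions each cell into at most $(2ekL/W)^W$ sub-cells with a fixed output sign pattern. Multiplying the two counts and imposing $2^k \le (\mathrm{const}\cdot k L U / W)^{O(W)}$, taking logarithms and solving for $k$ (using $U\ge L$ and absorbing $\ln k,\ln L$ into $\ln U$) yields the target bound $k=\Ocal{LW\ln U}$.

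The main obstacle is controlling the degree blow-up through the layers without losing a factor of $L$ in the exponent: a careless global application that treats the entire network as a degree-$L$ polynomial in all $W$ variables gives $\Ocal{L^2 W\ln(LW)}$, which is sharper than VC-style bounds but worse than the target. The sharper Bartlett--Harvey--Liaw--Mehrabian accounting distributes the $W$ parameters across layers and uses that, conditional on earlier activations, each layer's sign indicators are linear in that layer's own $W_\ell$ weights; executing this stratified induction carefully and combining the per-layer counts is the technical crux of the argument.
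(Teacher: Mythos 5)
This statement is quoted verbatim from Bartlett--Harvey--Liaw--Mehrabian (Theorem 7 of the cited reference) and the paper supplies no proof of it, so there is no internal proof to compare against. Your outline does follow the strategy of the original source: reduce the pseudo-dimension to counting sign patterns of $\theta\mapsto\bigl(\operatorname{sign}(\phi_\theta(x_i)-y_i)\bigr)_i$, stratify parameter space by the ReLU activation patterns layer by layer, and apply a Warren/Goldberg--Jerrum sign-pattern bound on each stratum.

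There is, however, a concrete slip in your accounting that, as written, ``proves'' too much. You claim the layerwise refinement yields at most $(\mathrm{const}\cdot kU/W)^{W}$ activation-pattern cells and hence $2^{k}\le(\mathrm{const}\cdot kLU/W)^{O(W)}$; solving that inequality gives $k=\Ocal{W\ln(LU)}$ with no factor of $L$, which would contradict the $\Omega(LW\log(W/L))$ lower bound established in the same reference. The factor of $L$ in your final answer does not actually emerge from the chain of inequalities you displayed. The error is in the single-exponent cell count: you cannot apply Warren's bound once to all $kU$ activation indicators as polynomials in all $W$ variables, because the pre-activations of layers $\ell\ge 2$ are only piecewise polynomial in $\theta$ (they become polynomial only after conditioning on the earlier activations, which is the whole reason the refinement must be iterated). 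The valid argument multiplies $L$ per-layer counts, the $\ell$-th of the form $(\mathrm{const}\cdot kU_\ell\,\ell/W_{\le\ell})^{W_{\le\ell}}$ where $W_{\le\ell}$ is the number of parameters in layers $1,\dots,\ell$; the product therefore carries exponent $\sum_{\ell}W_{\le\ell}$, which is $\Theta(LW)$ in the worst case, giving $2^{k}\le(\mathrm{const}\cdot kLU)^{O(LW)}$ and hence $k=\Ocal{LW\ln U}$ after the standard bootstrapping of the $\ln k$ term. So the approach is the right one, but you must carry the per-layer exponents $W_{\le\ell}$ through the product rather than collapsing them to $W$; that is precisely the ``stratified induction'' you deferred, and it is where the factor $L$ (more precisely $\bar L=\tfrac{1}{W}\sum_\ell W_{\le\ell}\le L$) enters the bound.
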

A simple consequence for two-layer ReLU neural networks is
\begin{equation}\label{eqn:bartlett2019}
    \Pdim(\Fcal(m,n)) \leq C mn \ln(m) \,,
\end{equation}
where $C>0$ is an independent constant. As a consequence, an algorithm independent generalization bound can be obtained by combining \Eqref{eqn:bartlett2019} and Proposition \ref{prop:gen_by_pdim}.
\begin{corollary}
    Let $\phi_\theta$ be any two-layer neural network from \Eqref{eqn:NNclass}, $z,z_1,\ldots,z_N$ be i.i.d. random variables on $\Zcal=\Xcal \times [0,1]$. Under Assumption \ref{assump:boundedness}, then with probability at least $1-\delta$,
    \[
     | \mathbb{E}_z\left[l(z,\phi_\theta)\right] - \frac{1}{N} \sum_{i=1}^N l(z_i,\phi_\theta) | \leq C L^2 \sqrt{\frac{\ln(1/\delta)}{N}} + C L^2 \sqrt{\frac{mn \ln m\ln N}{N}} \,,
    \]
    where $l(z,\phi_\theta) = |\phi_\theta(x)-y|^2$ is the $l^2$ loss and $C>0$ is an independent constant.
\end{corollary}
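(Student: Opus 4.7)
The plan is to apply Proposition~\ref{prop:gen_by_pdim} after an affine rescaling that brings both the network outputs and the labels into $[0,1]$, then invoke the pseudo-dimension estimate from \Eqref{eqn:bartlett2019}. The only substantive obstacle is bookkeeping: tracking how the boundedness constant $L$ from Assumption~\ref{assump:boundedness} propagates through the rescaling to produce the prefactor $L^2$ on the right-hand side, and verifying that the pseudo-dimension does not grow under the rescaling so that the bound from \Eqref{eqn:bartlett2019} still applies.

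\textbf{Rescaling.} Without loss of generality take $L\geq 1$. Under Assumption~\ref{assump:boundedness}, $\phi_\theta(x)\in[-L,L]$ for every $\phi_\theta\in\Fcal(m,n)$ and $x\in\Xcal$, and by hypothesis $y\in[0,1]\subset[-L,L]$. Define
\[
\tilde{\phi}_\theta(x)\coloneqq\frac{\phi_\theta(x)+L}{2L+1}, \qquad \tilde{y}\coloneqq\frac{y+L}{2L+1},
\]
so that $\tilde{\phi}_\theta:\Xcal\to[0,1]$, $\tilde{y}\in[0,1]$, and $|\tilde{\phi}_\theta(x)-\tilde{y}|^2=(2L+1)^{-2}|\phi_\theta(x)-y|^2\in[0,1]$. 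The rescaled class $\tilde{\Fcal}(m,n)\coloneqq\{\tilde{\phi}_\theta:\phi_\theta\in\Fcal(m,n)\}$ then satisfies the hypotheses of Proposition~\ref{prop:gen_by_pdim}.

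\textbf{Pseudo-dimension is preserved.} Since the pseudo-dimension depends only on the sign patterns of $f(x_i)-y_i$ over finite samples, it is invariant under strictly increasing affine transformations of $f$ (after transforming the thresholds $y_i$ by the same map). Hence $\Pdim(\tilde{\Fcal}(m,n))=\Pdim(\Fcal(m,n))\leq C\,mn\ln m$ by \Eqref{eqn:bartlett2019}.

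\textbf{Apply Proposition~\ref{prop:gen_by_pdim} and rescale back.} Applying the proposition to $\tilde{\Fcal}(m,n)$ and to the i.i.d.\ sample $(x_i,\tilde{y}_i)\in\Xcal\times[0,1]$ yields, with probability at least $1-\delta$,
\[
\Big|\Ebb{|\tilde{\phi}_\theta(x)-\tilde{y}|^2}-\frac{1}{N}\sum_{i=1}^N|\tilde{\phi}_\theta(x_i)-\tilde{y}_i|^2\Big|\leq C\sqrt{\frac{\ln(1/\delta)}{N}}+C\sqrt{\frac{mn\ln m\ln N}{N}}.
\]
Multiplying both sides by $(2L+1)^2\leq 9L^2$ converts the rescaled loss back into $|\phi_\theta(x)-y|^2$ and absorbs the $(2L+1)^2$ factor into the constant, yielding the stated bound.
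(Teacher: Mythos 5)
Your proof is correct and follows essentially the same route the paper intends: combining the pseudo-dimension bound \eqref{eqn:bartlett2019} with Proposition~\ref{prop:gen_by_pdim}. The only addition is that you explicitly carry out the affine rescaling to $[0,1]$ and verify that the pseudo-dimension is invariant under it, a bookkeeping step the paper leaves implicit but which is exactly where the $L^2$ prefactor comes from.
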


The above estimate holds for any two-layer ReLU neural network functions thus it is an algorithm independent bound of the generalization error. In Theorem \ref{thm:lowrankbias}, we have shown that the neural network parameter $\theta$ is close to a low-rank parameter $\theta^\ast$ when trained with mini-batch SGD. This low-rank bias essentially implies the learned neural network function $\phi_{\theta^\ast}$ belongs to a neighborhood of a smaller function class than $\Fcal(m,n)$. We show below that low-rank bias can improve the generalization error from $\Ocal{\sqrt{\frac{mn \ln m \ln N}{N}}}$ to $\Ocal{\sqrt{\frac{(m+n) \ln m \ln N}{N}}}$.

\subsection{Generalization error of a low-rank NN}
For a two-layer ReLU neural network $\phi_\theta$ with a rank-$k$ matrix $V\in\Rbb^{m\times n}$, we reparametrize $V=V_2V_1$ with $V_1\in \Rbb^{k\times n}$ and $V_2\in \Rbb^{m\times k}$. In other words, we can rewrite $\phi_\theta(x)$ as a composition of a linear function $y = V_1 x$ and a two-layer ReLU neural network $\Tilde{\phi}_\theta(y) \in F(m,k)$ with input dimension $k$.
We thus define the linear function class
\[
L(k,n) = \{f:\Rbb^n \rightarrow \Rbb^k: f(x) = Ax, A \in \Rbb^{k\times n}\} \,.
\]
We define the composition of functions from $L(k,n)$ and $F(m,k)$, i.e. the low-rank two-layer ReLU neural networks, as the following
\begin{equation}\label{eqn:lowrankNNclass}
\tilde{\Fcal}(m,n,k) \coloneqq \{ f \circ g: f\in F(m,k), g\in L(k,n) \} \,.
\end{equation}
We can easily calculate the pseudo-dimension of $\tilde{\Fcal}(m,n,k)$ using the following property of pseudo-dimension.
\begin{lemma}\label{lem:pseudodim}
For any positive integers $m,n,k$,
\[
    \Pdim(\tilde{\Fcal}(m,n,k)) \leq C (m+n)k \ln(m) \,.
\]
\end{lemma}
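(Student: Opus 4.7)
\textbf{Proof proposal for Lemma~\ref{lem:pseudodim}.}

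The plan is to realize every element of $\tilde{\Fcal}(m,n,k)$ as a concrete three-layer feedforward ReLU network and then invoke Theorem~7 of \cite{bartlett2019nearly} quoted in the preceding subsection. By the definitions of $\Fcal(m,k)$ in \eqref{eqn:NNclass} and of $L(k,n)$, any $h\in \tilde{\Fcal}(m,n,k)$ takes the explicit composed form
\[
h(x) \;=\; U\,\sigma\bigl(V_2 V_1 x + b\bigr),\qquad V_1\in\Rbb^{k\times n},\ V_2\in\Rbb^{m\times k},\ U\in\Rbb^{1\times m},\ b\in\Rbb^{m}.
\]
I would view this as a three-layer feedforward network consisting of a linear bottleneck $x\mapsto V_1 x$, an affine-plus-ReLU hidden layer $y\mapsto \sigma(V_2 y + b)$ of width $m$, and a linear readout $z\mapsto Uz$.

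Next, I would simply read off the complexity parameters: the depth is $L=3$, the total number of trainable scalar parameters is $W = kn + mk + 2m = (m+n)k + 2m$, and the number of ReLU gates is exactly $m$. Applying Theorem~7 of \cite{bartlett2019nearly}, which states $\Pdim(\Fcal)\leq \Ocal{LW\ln U}$ for any ReLU-network class with depth $L$, $W$ weights, and $U$ ReLU units, plugging in these counts, and absorbing $2m \leq 2(m+n)k$ into the universal constant yields
\[
\Pdim(\tilde{\Fcal}(m,n,k)) \;\leq\; C\,(m+n)k\,\ln(m),
\]
which is precisely the desired bound.

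The only subtlety, and the mild ``hard part'' of the argument, is that the first layer $V_1$ carries no ReLU activation, so one must check that the Bartlett--Harvey--Liaw--Mehrabian bound applies to mixed linear/ReLU networks. This is not a genuine obstacle: identity layers contribute to $L$ and $W$ but not to the ReLU count $U$, and the region-counting argument underpinning Theorem~7 depends only on the ReLU gates and not on whether intermediate layers are activated. Equivalently, one may fuse $V_2V_1$ into a single rank-at-most-$k$ weight matrix, parametrize such matrices by $(m+n)k$ scalars, and then apply the standard two-layer case of Theorem~7 directly; either route yields the advertised bound up to absolute constants.
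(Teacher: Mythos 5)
Your proposal is correct and follows essentially the same route as the paper: both treat each element of $\tilde{\Fcal}(m,n,k)$ as a three-layer network with $\Ocal{(m+n)k}$ parameters and $m$ ReLU units and then invoke Theorem~7 of \cite{bartlett2019nearly}. Your additional remark justifying why the activation-free first layer does not break the Bartlett--Harvey--Liaw--Mehrabian bound is a welcome bit of care that the paper's one-line proof omits.
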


We can now calculate the generalization error for neural networks trained with mini-batch SGD and WD.
\begin{theorem}\label{thm:genbound}
    Let $\phi_{\theta^\ast}$ be a two-layer neural network in \Eqref{eqn:2layerNN} trained with mini-batch SGD as in \Eqref{eqn:mini-batchSGD} with i.i.d. data $\zsf\in \Zcal^N$, and $z\in\Zcal$ is an independent copy of $z_1$.
    Suppose Assumption \ref{assump:boundedness} and Assumption \ref{assump:minibatch} is satisfied, then with probability at least $1-\delta$, we have
    \[
     | \mathbb{E}_z\left[l(z,\phi_{\theta^\ast})\right] - \frac{1}{N} \sum_{i=1}^N l(z_i,\phi_{\theta^\ast}) | \leq C L^2 \sqrt{\frac{\ln(1/\delta)}{N}} + C L^2 \sqrt{\frac{(m+n) \ln m\ln N}{N}} \,,
    \]
    where $l(z,\phi_\theta) = |\phi_\theta(x)-y|^2$ is the $l^2$ loss and $C>0$ is an independent constant.
\end{theorem}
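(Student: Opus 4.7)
The plan is to leverage the low-rank structure of $V^\ast$ established in Theorem~\ref{thm:lowrankbias} so that the generalization bound can be derived from Proposition~\ref{prop:gen_by_pdim} applied to the reduced class $\tilde{\Fcal}(m,n,2)$ rather than to the full class $\Fcal(m,n)$. Under Assumption~\ref{assump:minibatch}, Theorem~\ref{thm:lowrankbias} guarantees that $V^\ast$ has rank at most two, so it admits a factorization $V^\ast = V_2 V_1$ with $V_1 \in \Rbb^{2\times n}$ and $V_2 \in \Rbb^{m\times 2}$. Consequently $\phi_{\theta^\ast}(x) = U\sigma(V_2(V_1 x) + b)$ can be written as the composition of the linear map $x \mapsto V_1 x$ in $L(2,n)$ with a two-layer ReLU network in $\Fcal(m,2)$, and hence $\phi_{\theta^\ast} \in \tilde{\Fcal}(m,n,2)$ as defined in \Eqref{eqn:lowrankNNclass}.

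Next I would invoke Lemma~\ref{lem:pseudodim} with $k=2$ to obtain $\Pdim(\tilde{\Fcal}(m,n,2)) \leq C(m+n)\ln m$. Under Assumption~\ref{assump:boundedness}, every $f \in \tilde{\Fcal}(m,n,2)$ takes values in $[-L,L]$, so the squared loss $l(z,f)=|f(x)-y|^2$ is uniformly bounded by a constant of order $L^2$ (given $y\in[0,1]$). Rescaling by this factor maps the loss class into $[0,1]$ without changing the pseudo-dimension up to a universal constant, so Proposition~\ref{prop:gen_by_pdim} applies to the rescaled class. After undoing the rescaling, one obtains, with probability at least $1-\delta$,
\[
\sup_{f \in \tilde{\Fcal}(m,n,2)} \bigl| \Ebb{l(z,f)} - \tfrac{1}{N}\sum_{i=1}^N l(z_i,f) \bigr| \leq CL^2 \sqrt{\frac{\ln(1/\delta)}{N}} + CL^2 \sqrt{\frac{(m+n)\ln m \ln N}{N}} \,.
\]
Because $\phi_{\theta^\ast}$ is deterministically contained in $\tilde{\Fcal}(m,n,2)$ by the first step, this uniform bound specialises to $\phi_{\theta^\ast}$ and gives the claimed inequality.

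There is no real technical obstacle beyond the reduction step, since the complexity machinery in Proposition~\ref{prop:gen_by_pdim} and Lemma~\ref{lem:pseudodim} is already in place. The subtle conceptual point is that the deviation bound must be \emph{uniform} over $\tilde{\Fcal}(m,n,2)$ rather than pointwise at $\phi_{\theta^\ast}$, because the realised network depends on the training sample $\zsf$; it is precisely this passage through a uniform bound that lets the pseudo-dimension of the low-rank class, rather than the trivial pseudo-dimension of a single function, control the rate. The role of Theorem~\ref{thm:lowrankbias} is to absorb the sample dependence of $\phi_{\theta^\ast}$ into membership of a data-independent class of reduced complexity, thereby shrinking the $\sqrt{mn\ln m \ln N/N}$ rate of the algorithm-independent corollary to the improved $\sqrt{(m+n)\ln m \ln N/N}$ rate.
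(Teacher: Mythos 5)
Your proposal is correct and follows essentially the same route as the paper, whose proof simply cites Theorem~\ref{thm:lowrankbias}, Lemma~\ref{lem:pseudodim} (with $k=2$), and Proposition~\ref{prop:gen_by_pdim} in exactly the chain you spell out. Your elaborations --- the explicit factorization $V^\ast = V_2V_1$ placing $\phi_{\theta^\ast}$ in $\tilde{\Fcal}(m,n,2)$, the rescaling of the loss into $[0,1]$ to account for the $L^2$ prefactor, and the remark that uniformity over the data-independent low-rank class is what absorbs the sample dependence of $\phi_{\theta^\ast}$ --- are all details the paper leaves implicit, and they are handled correctly.
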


\section{Numerical Experiments}\label{sec:numeric}
In this section, we numerically demonstrate that WD leads to low-rank bias for two-layer neural networks.
Our experiments are conducted on two datasets: the California housing dataset for regression tasks and the MNIST dataset for classification. We investigate how the regularization coefficient $\mu_{V}$ and batch size $B$ affect the rank of $V$, providing evidence to support Theorem \ref{thm:lowrankbias_perturbed}. Additionally, we validate Assumption \ref{assump:minibatch_perturbed}, which suggests that sufficient training leads to small batch gradients. The rank of the coefficient matrix $V$ is measured by the stable rank $\srankV = \frac{\|V\|^2_{F}}{\|V\|_2^2}$. The generalization error in \Eqref{eqn:decomp_error} is computed as the difference between the testing Mean Squared Error (MSE) and the training MSE.

\subsection{Experimental Setup}
\subsubsection{Training and Initialization}
All experiments are conducted using two-layer neural networks with network sizes and hyperparameters specified in Table \ref{tab:hyperparameters}. These networks are trained with mini-batch SGD and the loss function defined in \Eqref{eqn:loss}. Note that here we use the square loss instead of the cross entropy loss for classification task. 
The weights matrices $U$ and $V$ are initialized with Kaiming initialization~\citep{he2015delving}, while the bias $b$ is initialized from the zero vector. The learning rate is set to $0.0001$ with a decay rate of 0.95 every 200 epochs. For simplicity, we have chosen $g$ as a constant function thus $\mu_V$ remains constant during the training dynamics. 
\begin{table}[ht]
    \centering
    \caption{Hyperparameters for neural network training}
    \label{tab:hyperparameters}
    \begin{tabular}{lcccccc}
        \toprule
        \textbf{Dataset} & \textbf{Width of NN} & \boldmath$\mu_U$ & \boldmath$\mu_b$ & \textbf{Epoch} \\
        \midrule
        California Housing Prices & 8192   & 1e-4 & 1e-4 & 5000 \\
        MNIST              & 32768   & 1e-6 & 1e-6 & 2000 \\
        \bottomrule
    \end{tabular}
\end{table}
\subsubsection{Datasets}
\textbf{California Housing Prices}. 
The California housing dataset~\citep{pace1997sparse} comprises 20640 instances, each with eight numerical features. The target variable is the median house value for California districts, measured in hundreds of thousands of dollars. We fetch dataset by \texttt{scikit-learn} package~\citep{pedregosa2011scikit}, and randomly samples 1800 instances for training and 600 for testing.

\textbf{MNIST}. 
The MNIST dataset~\citep{deng2012mnist} consists of 70000 grayscale images of handwritten digits of size 28$\times$28. We use built-in dataset from \texttt{torchvision} package~\citep{torchvision} and randomly sampled 9000 images for training 1000 for testing. Pixel values were normalized from $[0,255]$ to $[-1, 1]$ via a linear transform, with labels remaining in the range $[0, 9]$.

\subsection{California Housing Price}
In Figure \ref{fig:CHP_stablerank_mu}, we plot the stable rank of $V$ for various values of $\mu_V$ and the singular values of $V$ when $\mu_V=10^{-4}$ and $\mu_V=1$. It is observed that the stable rank is close to one only for large weight decay when $\mu_V=0.1$ and $\mu_V=1$, while $V$ is almost full rank for small weight decay $\mu_V=10^{-4}$, $10^{-3}$ and $10^{-2}$. This supports our theoretical predictions in Theorem \ref{thm:lowrankbias} and Theorem \ref{thm:lowrankbias_perturbed} that larger $\mu_V$ shrinks the distance from a rank one matrix $V$, and thus leads to a low-rank bias. 
In Figure \ref{fig:CHP_training_mu}, we plot the stable rank $\srankV$, training MSE, and generalization error during the training. It again aligns with our theory that WD leads to the low-rank bias. Furthermore, although WD leads to larger training MSE, the best generalization error was achieved in the large WD regime $\mu_V = 0.05$. We also found the stable rank and generalization error have weak dependence on the batch size $B$; see more details in the Appendix \ref{sec:appendix_batchsize}.
To validate Assumption \ref{assump:minibatch_perturbed}, which suggests that sufficient training results in small batch gradient norms, 
we analyze all random batches of the final epoch. We compute the Frobenius norm of these random batch gradients and present the results as histograms in Figure \ref{fig:CHP_histogram_gradientnorm}. 
It is shown that the Frobenius norm of most batch gradients is smaller than 2, with the largest norm being less than 12, regardless of the values of $\mu_V$ (ranging from $10^{-4}$ to 1).
In comparison, the Frobenius norm of a random Gaussian matrix with zero mean and variance $0.01$ of the same size is approximately $26$.

\begin{figure}[t]
    \centering
    \begin{subfigure}[b]{0.4\textwidth}
        \centering
       
        \includegraphics[width=\textwidth]{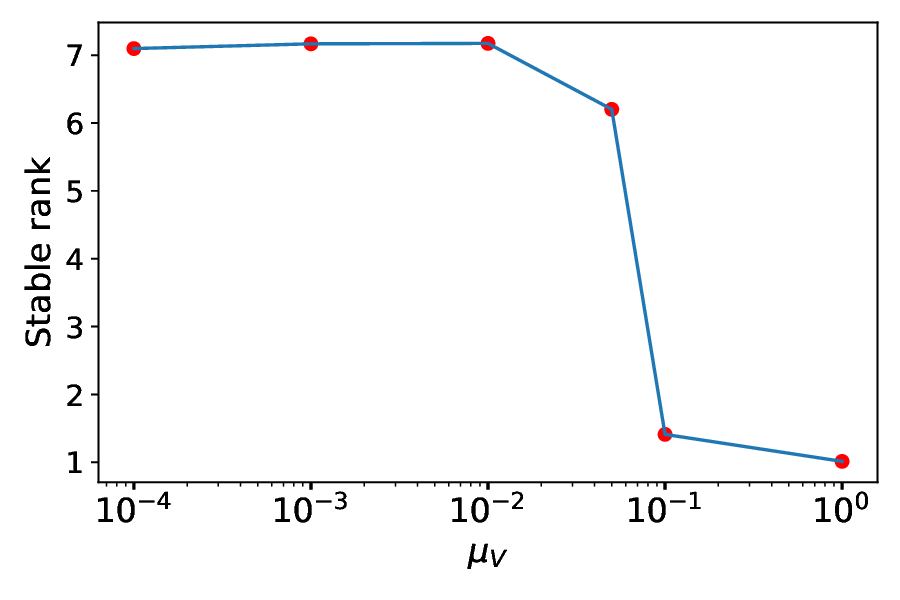}

    \end{subfigure}
    \hfill 
    \begin{subfigure}[b]{0.47\textwidth}
        \centering
      
        \includegraphics[width=\textwidth]{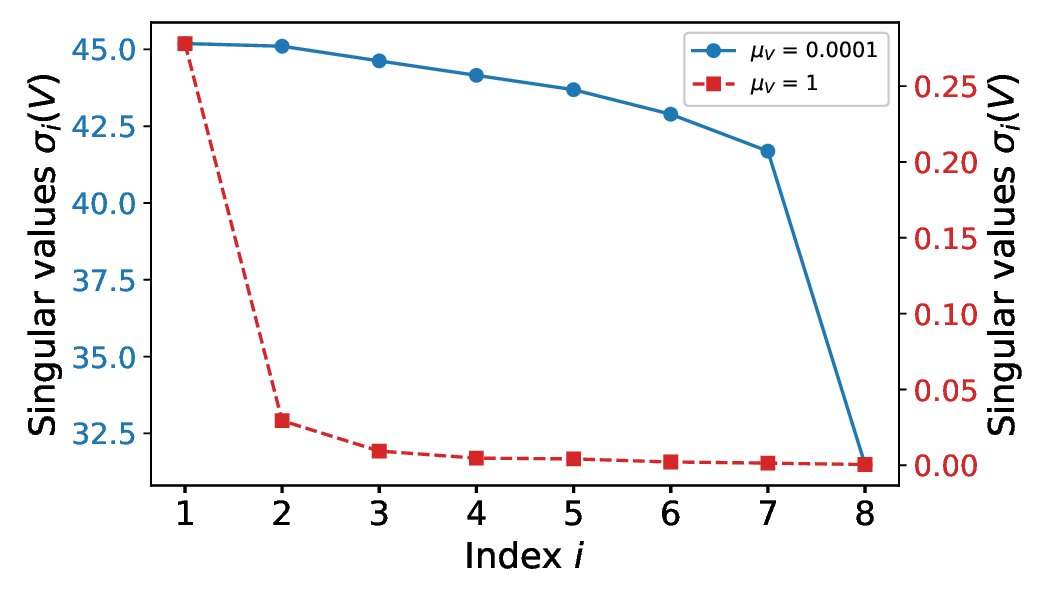}

    \end{subfigure}
    \caption{\textbf{California Housing Prices}.\  \textbf{Left: }Stable rank $\srankV$ versus $\mu_V$.\   \textbf{Right: }Singular values of $V$ for $\mu_V=0.0001$ and $1$. Here we fix batch size $B=16$.}
    \label{fig:CHP_stablerank_mu}
\end{figure}

\begin{figure}[htbp]
    \centering
    \begin{subfigure}[b]{0.32\textwidth}
        \centering
        \includegraphics[width=\textwidth]{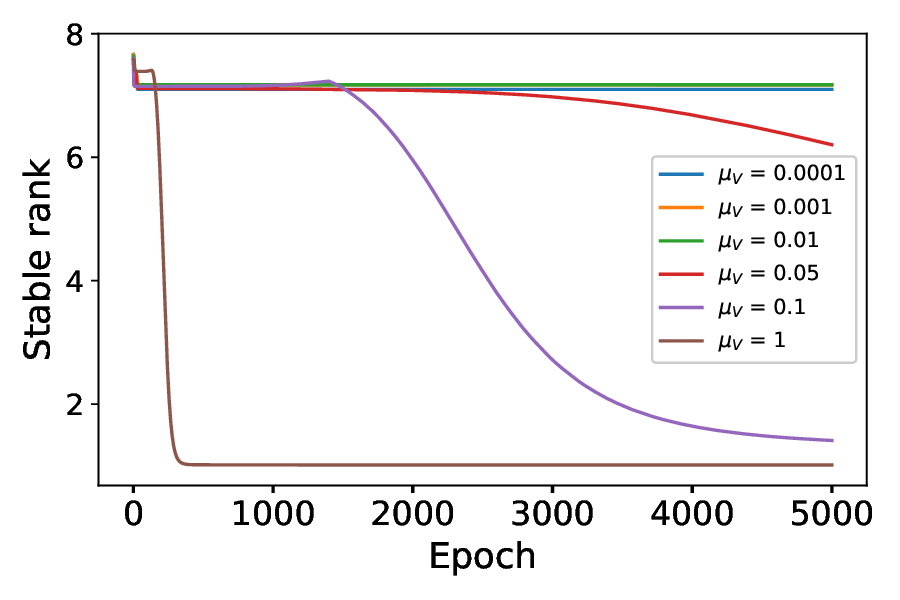}
    \end{subfigure}
    \hfill 
    \begin{subfigure}[b]{0.32\textwidth}
        \centering
        \includegraphics[width=\textwidth]{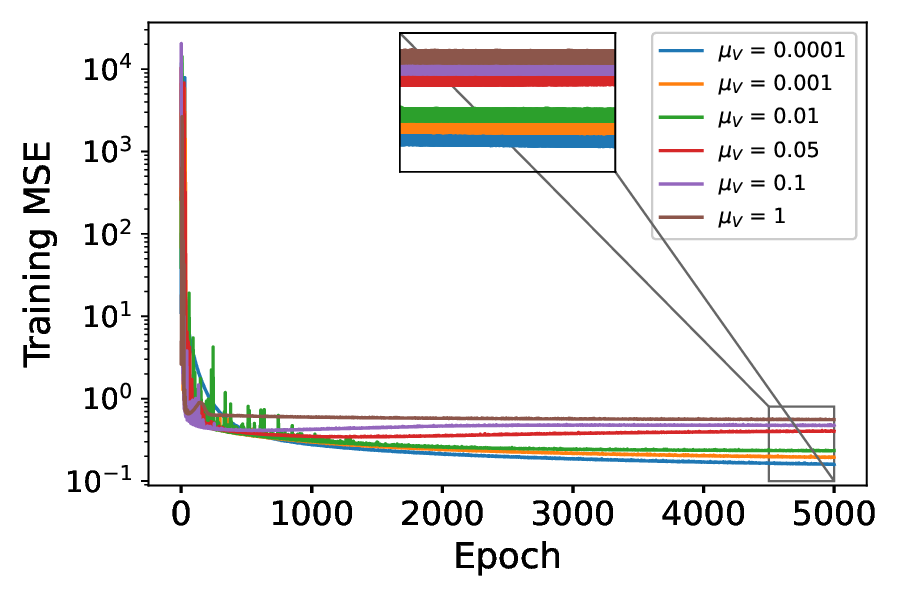}
    \end{subfigure}
    \hfill
    \begin{subfigure}[b]{0.32\textwidth}
        \centering
        \includegraphics[width=\textwidth]{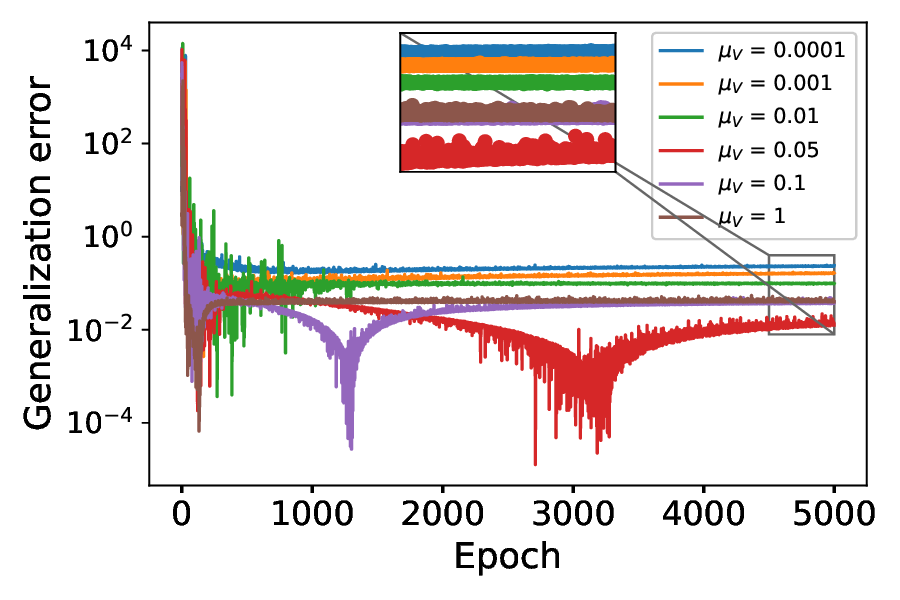}
    \end{subfigure}
    \caption{\textbf{California Housing Prices}.\ \textbf{Left:} Stable rank.\ \textbf{Middle: }Training MSE.\ \textbf{Right:} Absolute value of generalization error. Here we fix the batch size $B=16$. The sharp transition in the generalization error happens when it changes sign.}
    \label{fig:CHP_training_mu}
\end{figure}

\begin{figure}[htbp]
    \centering
   
    \begin{subfigure}[b]{0.32\textwidth}
        \centering
        \includegraphics[width = \textwidth]{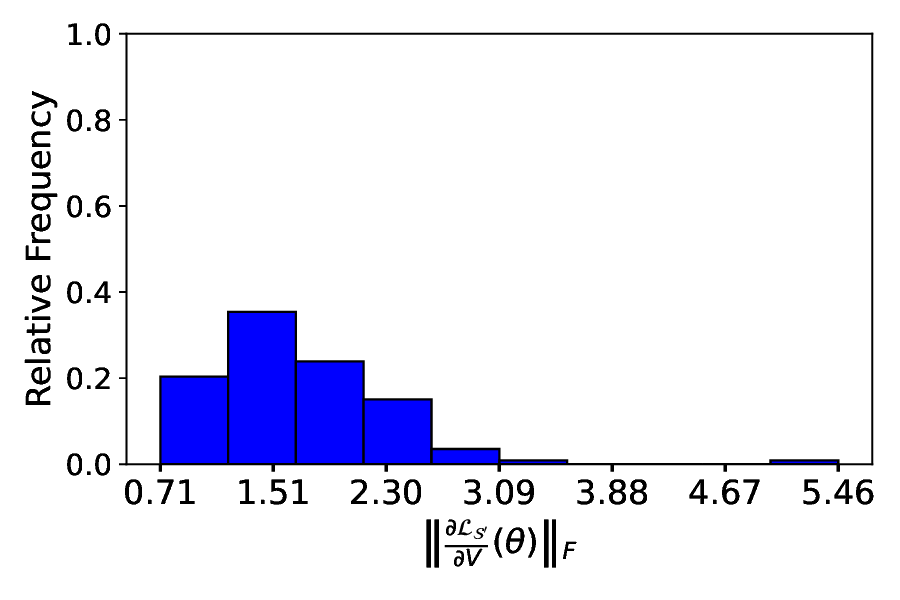}
        \caption{$\mu_{V}=0.0001$}
        
    \end{subfigure}
    \hfill
    \begin{subfigure}[b]{0.32\textwidth}
        \centering
        \includegraphics[width = \textwidth]{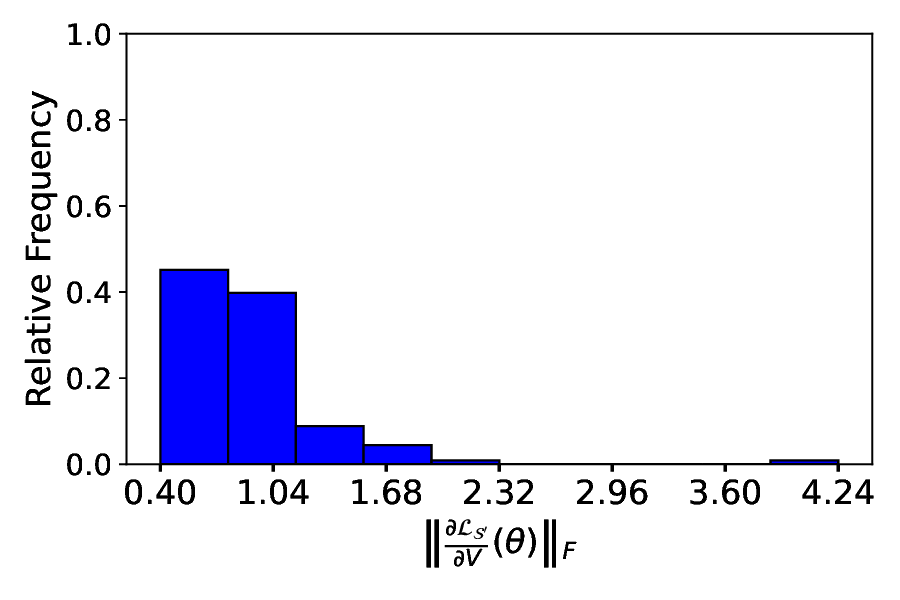}
        \caption{$\mu_{V}=0.001$}
        
    \end{subfigure}
    \hfill
    \begin{subfigure}[b]{0.32\textwidth}
        \centering
        \includegraphics[width = \textwidth]{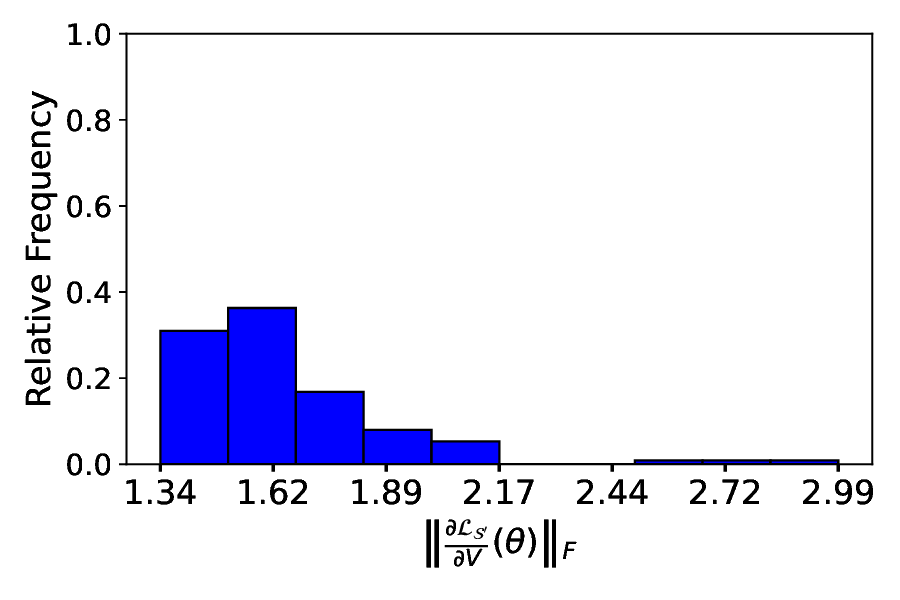}
        \caption{$\mu_{V}=0.01$}
        
    \end{subfigure}
    \\[1ex] 
    \begin{subfigure}[b]{0.32\textwidth}
        \centering
        \includegraphics[width = \textwidth]{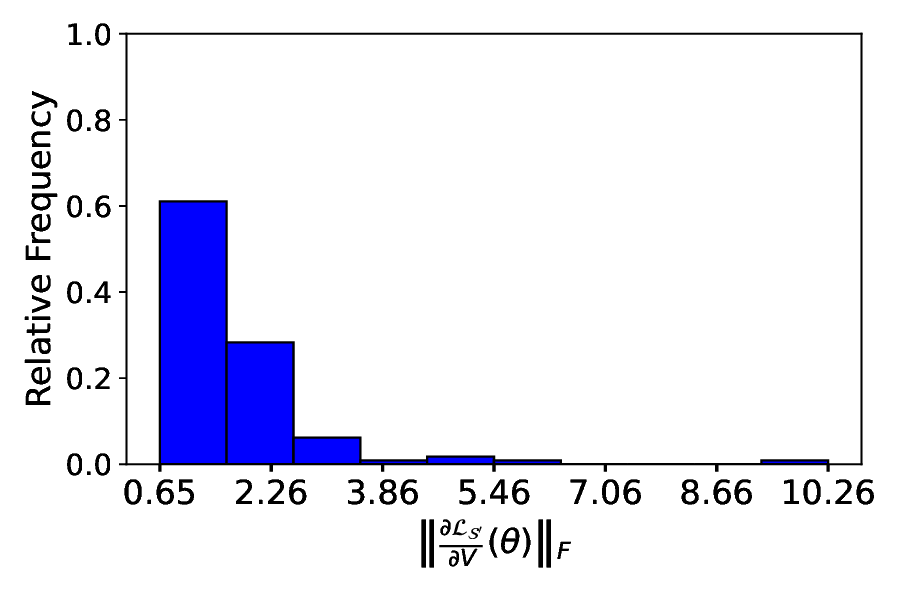}
        \caption{$\mu_{V}=0.05$}
        
    \end{subfigure}
    \hfill
    \begin{subfigure}[b]{0.32\textwidth}
        \centering
        \includegraphics[width = \textwidth]{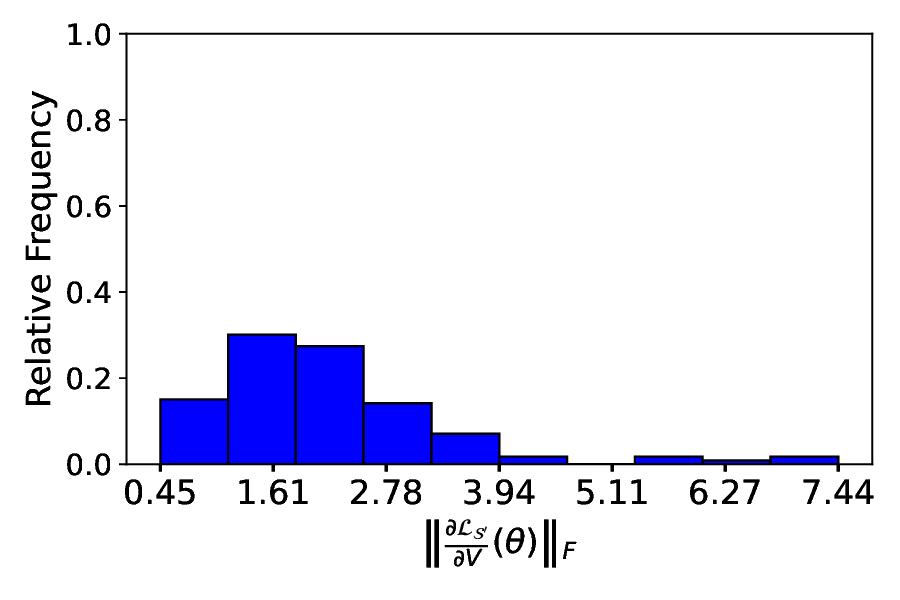}
        \caption{$\mu_{V}=0.1$}
        
    \end{subfigure}
    \hfill
    \begin{subfigure}[b]{0.32\textwidth}
        \centering
        \includegraphics[width = \textwidth]{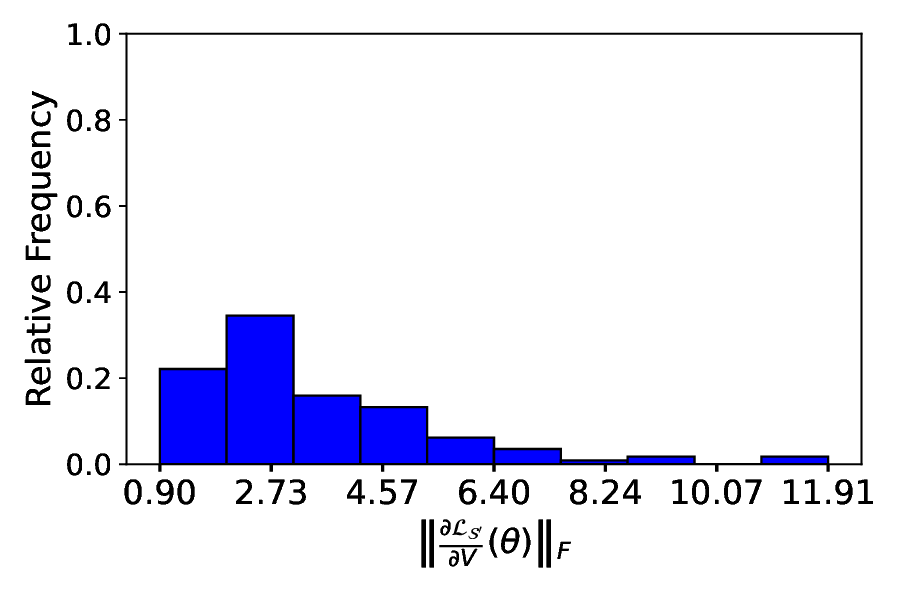}
        \caption{$\mu_{V}=1$}
        
    \end{subfigure}
    \caption{\textbf{California Housing Prices}. Histograms of the Frobenius norm of all batches in the final epoch. }
    \label{fig:CHP_histogram_gradientnorm}
\end{figure}

\subsection{MNIST Datasets}

To examine our theoretical results, we consider the square loss function thereby reformulating this classification problem as a regression problem.
In Figure \ref{fig:MNIST_stablerank_mu} we plot the stable rank $\srankV$ for various values of $\mu_V$ and the singular values of $V$ when $\mu_V=10^{-5}$ and $\mu_V=1$. In particular, the stable rank changes from $600$ to approximately $1$ when $\mu_V$ increases from $10^{-5}$ to $1$.
In Figure \ref{fig:MNIST_training_mu}, we plot the stable rank $\srankV$, training MSE and generalization error during training. It is observed that the best generalization error is achieved when $\mu_V=1$. 
In Figure \ref{fig:MNIST_accuracy}, we present training, testing, and generalization accuracy for different values of $\mu_V$ and batch size $B$. We observe that generalization accuracy tends to zero as we increase $\mu_V$, while it exhibits weak dependence on the batch size $B$.
Additionally, Figure \ref{fig:MNIST_histogram_gradientnorm} presents histograms of the Frobenius norm of batch gradients in the final epoch. The norm of most gradients are bounded by $50$ with the largest norm being $92$. In comparison, the Frobenius norm of a random Gaussian matrix with zero mean and $0.01$ variance of the same size is approximately $507$.
All these results are consistent with results from regression task. 

\begin{figure}[htbp]
    \centering
    \begin{subfigure}[b]{0.4\textwidth}
        \centering
        \includegraphics[width=\textwidth]{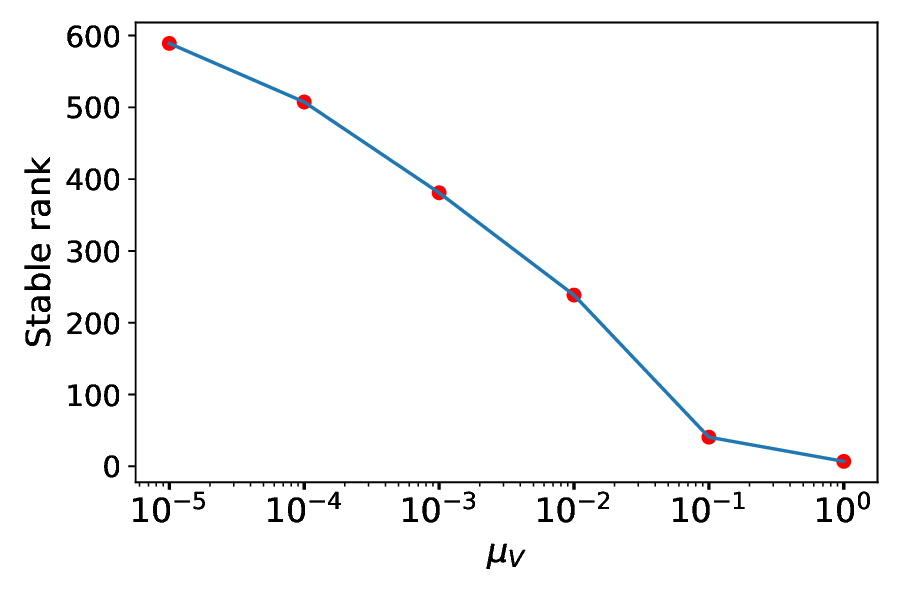}

    \end{subfigure}
    \hfill 
    \begin{subfigure}[b]{0.47\textwidth}
        \centering
        \includegraphics[width=\textwidth]{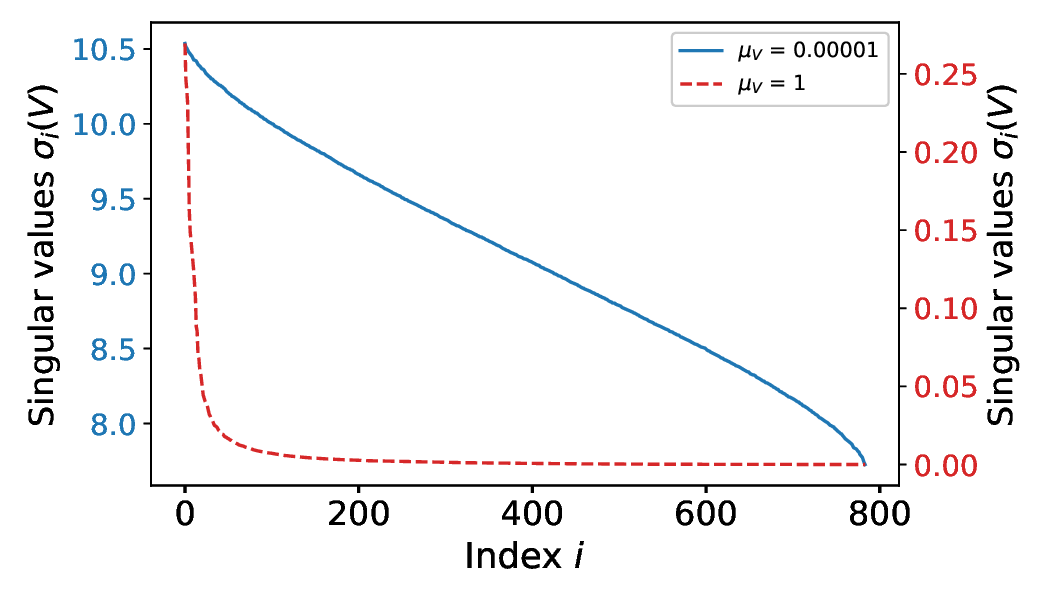}
    \end{subfigure}
    \caption{\textbf{MNIST}.\ \textbf{Left: }Stable rank versus $\mu_V$.\   \textbf{Right: }Singular values of $V$ for $\mu_V=10^{-5}$ and $1$. Here we fix batch size $B=64$.}
    \label{fig:MNIST_stablerank_mu}
\end{figure}

\begin{figure}[htbp]
    \centering
    \begin{subfigure}[b]{0.32\textwidth}
        \centering 
        \includegraphics[width=\textwidth]{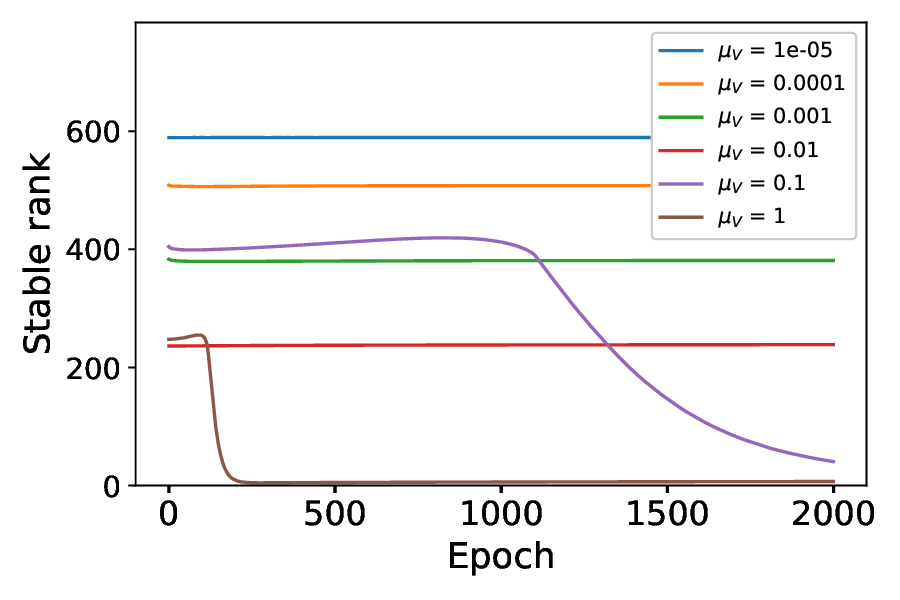}
    \end{subfigure}
    \hfill 
    \begin{subfigure}[b]{0.32\textwidth}
        \centering
        \includegraphics[width=\textwidth]{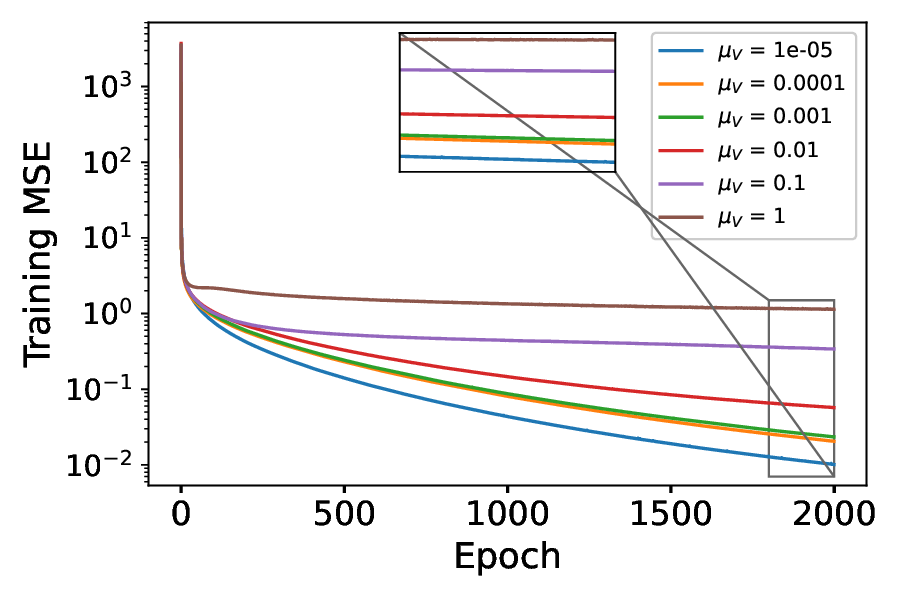}
    \end{subfigure}
    \hfill
    \begin{subfigure}[b]{0.32\textwidth}
        \centering
        \includegraphics[width=\textwidth]{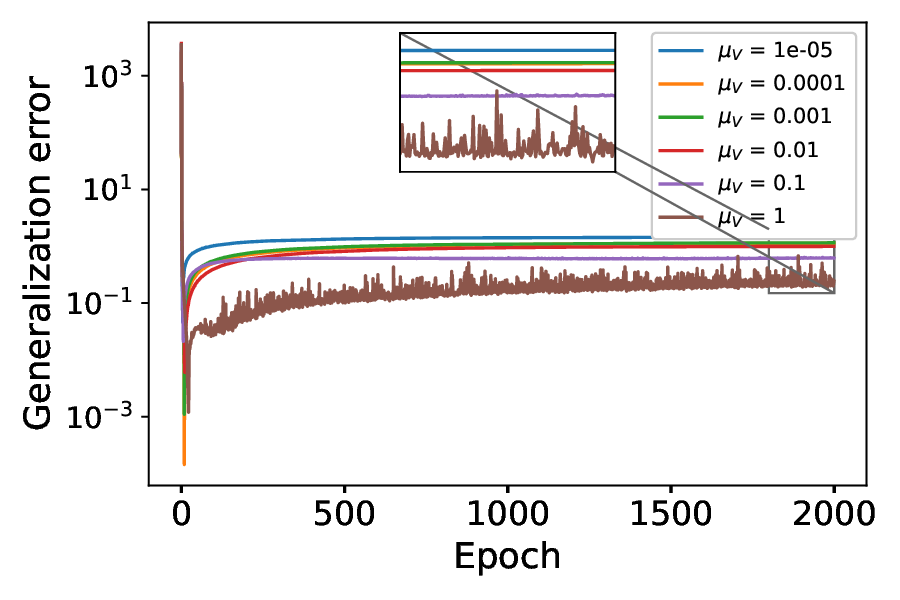}
    \end{subfigure}
    \caption{\textbf{MNIST}.\ \textbf{Left:} Stable rank.\ \textbf{Middle: }Training MSE.\ \textbf{Right:} Absolute value of generalization error. Here we fix the batch size $B=64$. The sharp transition in the generalization error happens when it changes sign.}
    \label{fig:MNIST_training_mu}
\end{figure}

\begin{figure}[ht]
    \centering
    \begin{subfigure}[htbp]{0.4\textwidth}
        \centering
   
        \includegraphics[width=\textwidth]{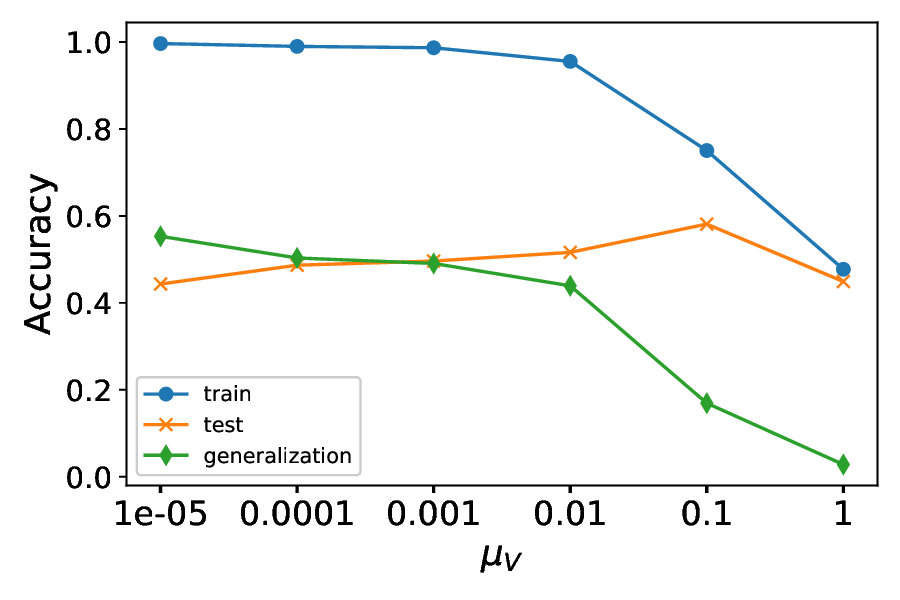}

    \end{subfigure}
    \begin{subfigure}[htbp]{0.4\textwidth}
        \centering
     
        \includegraphics[width=\textwidth]{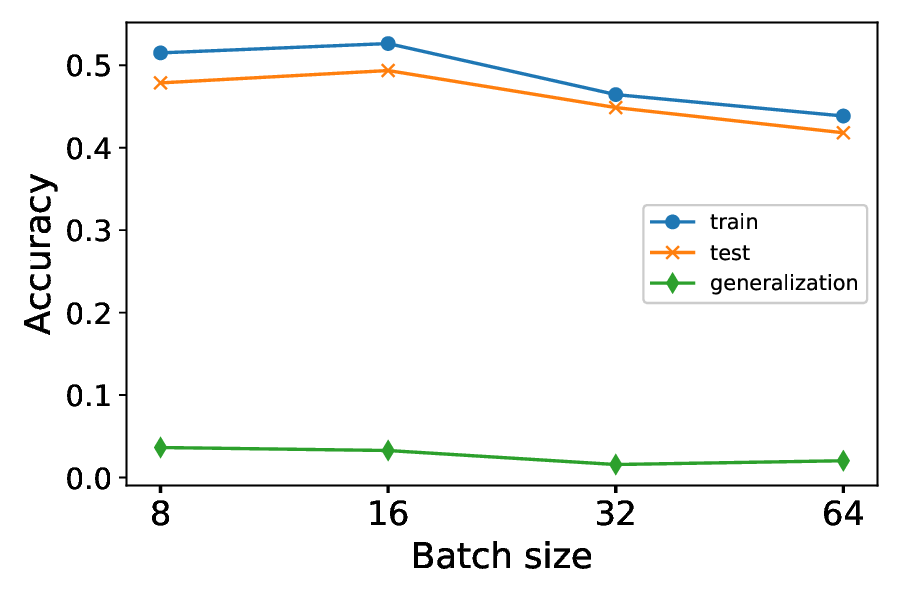}

    \end{subfigure}
    \caption{\textbf{MNIST}.\ \textbf{Left: } Accuracy when $B=64$.\  \textbf{Right: }Accuracy when $\mu_{V}=1$.}
    \label{fig:MNIST_accuracy}
\end{figure}

\begin{figure}[htbp]
    \centering
    \begin{subfigure}[b]{0.32\textwidth}
        \centering
        \includegraphics[width = \textwidth]
        {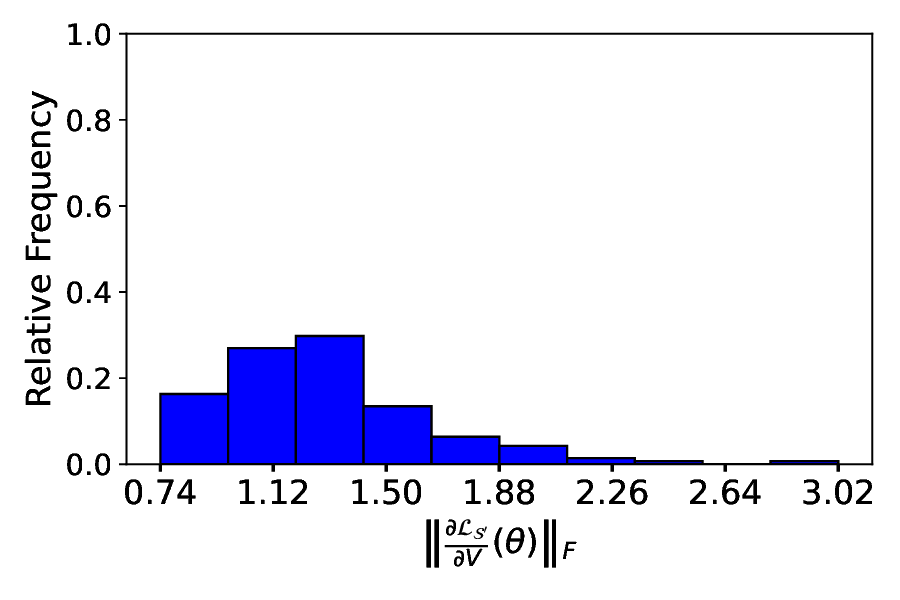}
        \caption{$\mu_{V}=0.0001$}
        \label{fig:sub1}
    \end{subfigure}
    \hfill
    \begin{subfigure}[b]{0.32\textwidth}
        \centering
        \includegraphics[width = \textwidth]{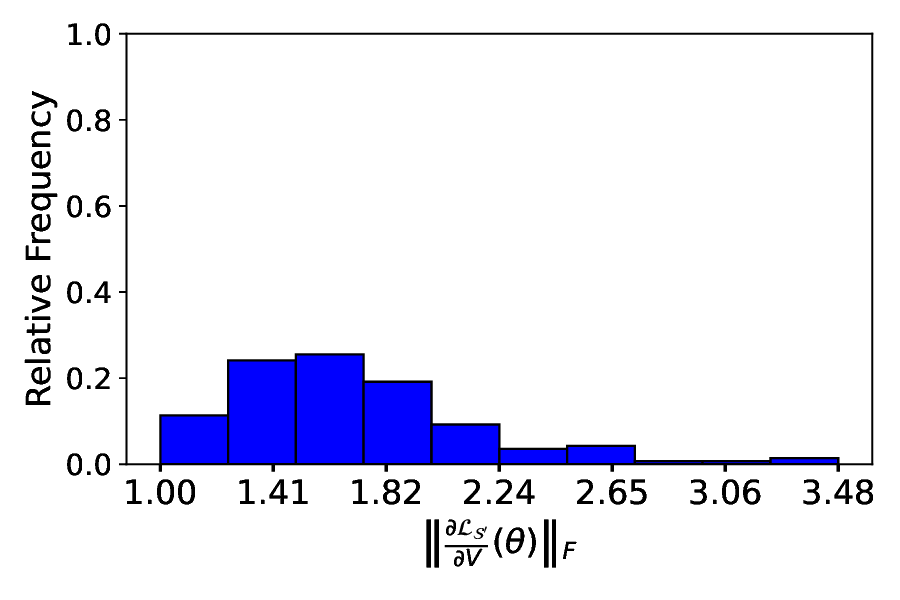}
        \caption{$\mu_{V}=0.001$}
        \label{fig:sub2}
    \end{subfigure}
    \hfill
    \begin{subfigure}[b]{0.32\textwidth}
        \centering
        \includegraphics[width = \textwidth]{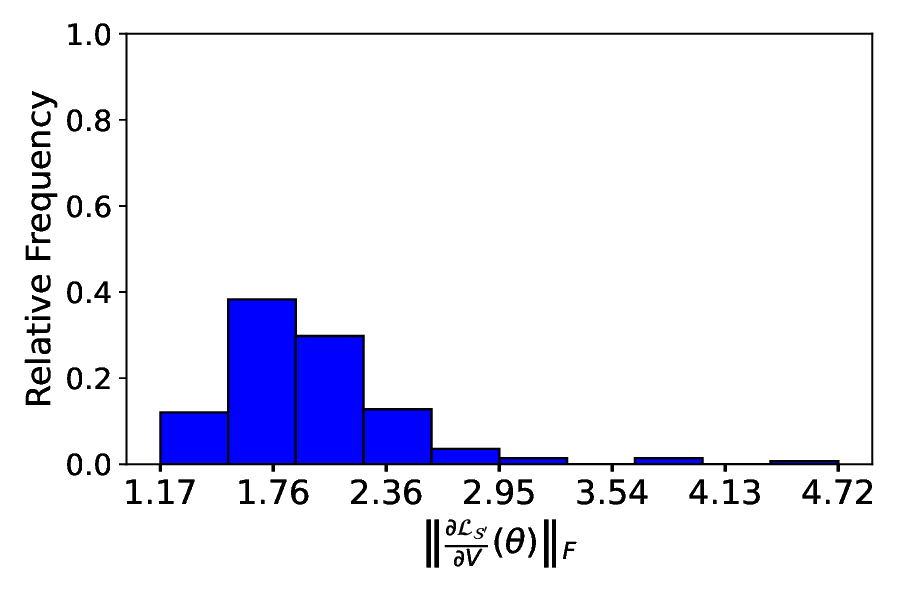}
        \caption{$\mu_{V}=0.01$}
        \label{fig:sub3}
    \end{subfigure}
    \\[1ex] 
    \begin{subfigure}[b]{0.32\textwidth}
        \centering
        \includegraphics[width = \textwidth]{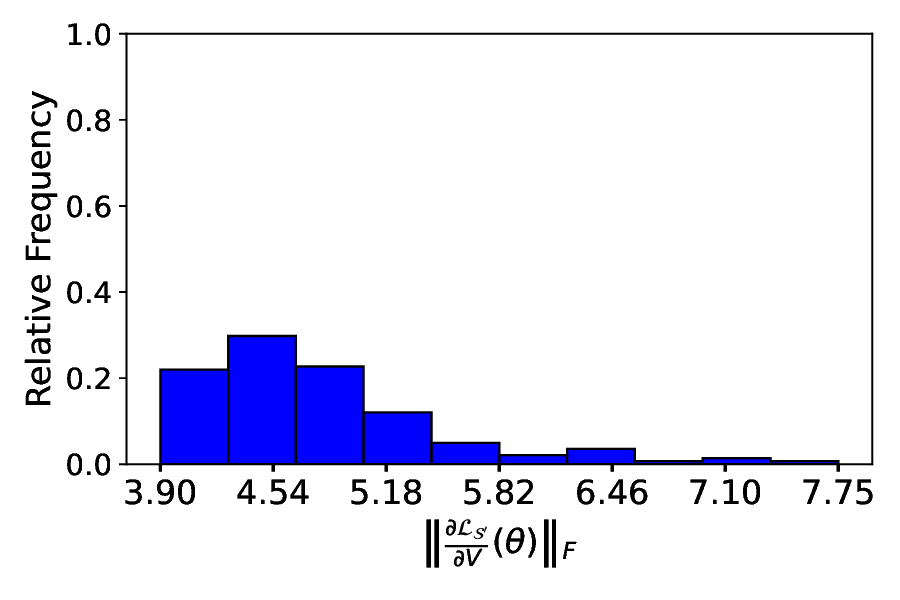}
        \caption{$\mu_{V}=0.05$}
        \label{fig:sub4}
    \end{subfigure}
    \hfill
    \begin{subfigure}[b]{0.32\textwidth}
        \centering
        \includegraphics[width = \textwidth]{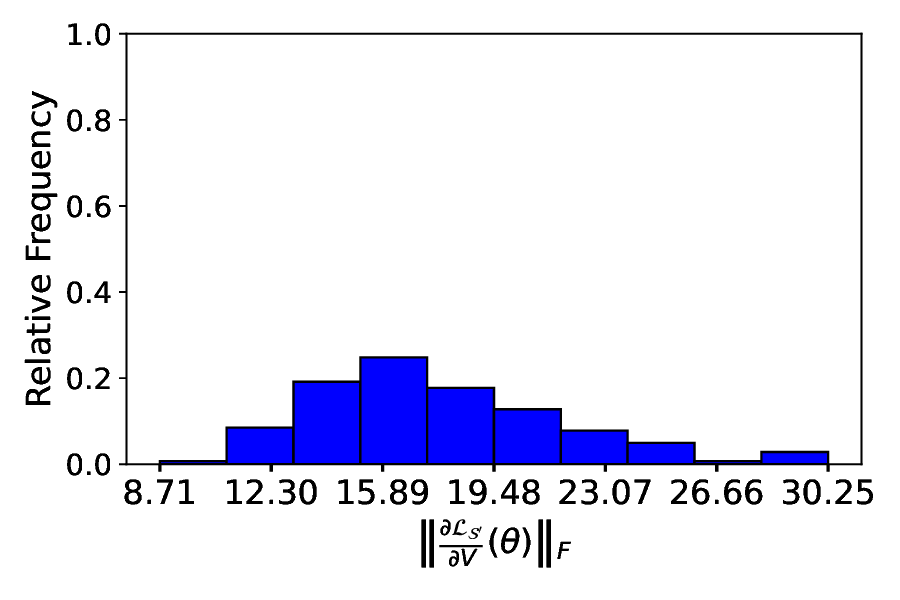}
        \caption{$\mu_{V}=0.1$}
        \label{fig:sub5}
    \end{subfigure}
    \hfill
    \begin{subfigure}[b]{0.32\textwidth}
        \centering
        \includegraphics[width = \textwidth]{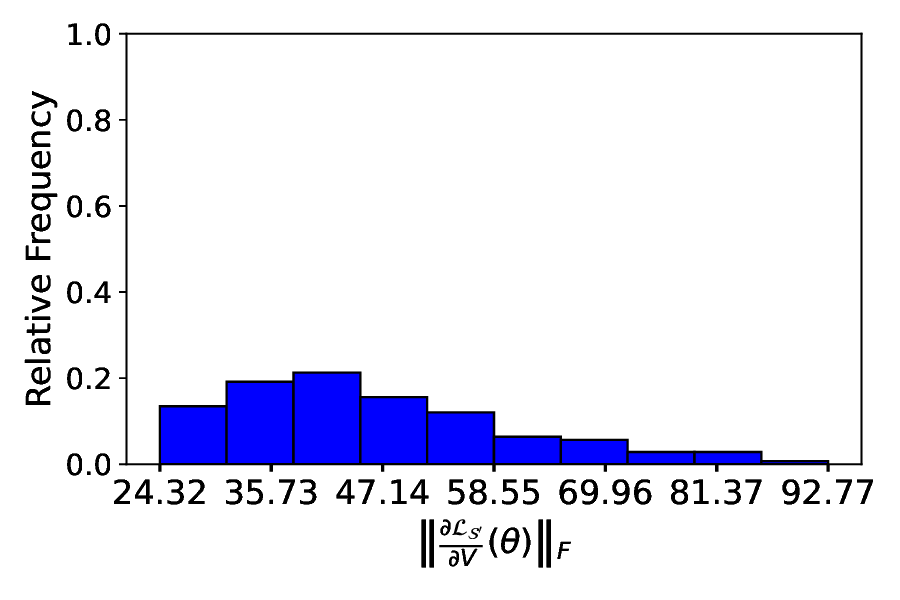}
        \caption{$\mu_{V}=1$}
        \label{fig:sub6}
    \end{subfigure}
    \caption{\textbf{MNIST}. Histograms of $\| \frac{\partial \Lcal_{\Scal'} }{\partial V}(\theta) \|_F$, where $\Scal'$ is a batch sampled from training dataset $\Scal$ and $\theta$ depends on $\mu_V$. }
    \label{fig:MNIST_histogram_gradientnorm}
\end{figure}

\section{Discussion}
Bounding generalization error is a fundamental problem in learning theory. Investigating the implicit bias of different algorithms helps address this problem by offering algorithm-specific estimates~\citep{neyshabur2017exploring}. Among these biases, the low-rank bias is of particular interest due to its potential for better compression and generalization  in many classification tasks. In this paper, we provide a mathematical proof that weight decay induces a low-rank bias in two-layer neural networks without relying on unrealistic assumptions. Our work explicitly shows how the gradient structure of neural networks, combined with weight decay, results in a low-rank weight matrix. We then further demonstrate that low-rank bias leads to better generalization bound. 
An interesting follow-up question is whether our analysis can be extended to deeper ReLU neural networks and other variants used in computer vision tasks, such as convolutional neural networks.
Another natural extension of our results involves adaptive regularized SGD as in~\Eqref{eqn:mini-batchSGD}, where the regularization strength varies across different batches.
In the proof of Theorem \ref{thm:lowrankbias_perturbed}, we show that the distance between trained weight matrix $V^\ast$ and a rank-two matrix $\tilde{V}^\ast$ is bounded by $\frac{2B }{ |g(x_{i_1},y_{i_1}) -  g(x_{i_2},y_{i_2})| }$. This suggests that a smaller generalization error bound can be achieved if the function $g$ can distinguish more effectively between different data points.

\bibliographystyle{plainnat}
\bibliography{templateArxiv}

\appendix
\section{Appendix}
\subsection{Proofs of Lemmas, propositions and theorems}
\begin{proof}[Proof of Lemma \ref{lem:gradD}]
    Denote $D_{ii}$ as the $i$-th diagonal entry of $D$ for $i = 1,\ldots,m$, then we have
    \[
    D_{ii}(x,V,b) = 1 \Leftrightarrow v_i^\top x + b_i >0 \,, \quad \text{and} \quad D_{ii}(x,V,b) = 0 \Leftrightarrow v_i^\top x + b_i \leq 0 \,,
    \]
    where $v_i^\top\in\Rbb^n$ denotes the $i$-th row of $V$. For a fixed $\hat{v}_i$ such that $\hat{v}_i^\top x + b >0$, one can construct a neighborhood $\Vcal_i$ around $\hat{v}_i$ such that $v_i^\top x + b_i >0$ for all $v_i \in \Vcal_i$. This implies that $D_{ii}(x,V,b)=1$ on $\Vcal_i$ and thus $\frac{\partial D(x,V,b)}{\partial v_i} = 0$ at $\hat{v}_i$. Analogously, $\frac{\partial D(x,V,b)}{\partial v_i} = 0$ at $\hat{v}_i$ as long as $\hat{v}_i^\top x + b <0$. Therefore, we have
    \[
    \frac{\partial D(x,V,b)}{\partial v_i} = 0
    \]
    for any $v_i$ except a measure zero set $\Vcal^0_i\coloneqq \{v_i \in \Rbb^n: v_i^\top x + b_i  = 0\}$. We thus conclude that $\frac{\partial D(x,V,b)}{\partial V} = 0$ for all $V$ except on $\Vcal^0 \coloneqq \{V\in \Rbb^{m \times n}: v_i \in \Vcal^0_i\,,\text{for some }i\in[m]\}$, which is a measure zero set.
\end{proof}

\begin{proof}[Proof of Lemma \ref{lem:rankone}]
    Apply product rule to \Eqref{eqn:2layerNN2}, we have
    \[
    \frac{\partial \phi(x;\theta)}{\partial V} = U \left(\frac{\partial D(x,V,b)}{\partial V}\right) (Vx+b) + x U D(x,V,b) \,.
    \]
    The conclusion holds by applying Lemma \ref{lem:gradD} and noting $xU \in \Rbb^{n \times m}$ is a rank one matrix.
\end{proof}

\begin{proof}[Proof of Theorem \ref{thm:lowrankbias}]
    We consider two batches $\Scal_1',\Scal_2' \subset \Scal$ with $|\Scal_1'| = |\Scal_1'| = B$ such that they only differ on one data pair, meaning that $\Scal_1'/\Scal_2' = \{(x_{i_1},y_{i_1})\}$ and $\Scal_2'/\Scal_1' = \{(x_{i_2},y_{i_2})\}$ for some $i_1 \neq i_2 \in \Scal$.
    Then under Assumption \ref{assump:minibatch}, we must have
    \begin{equation}\label{eqn:KKT}
    \frac{1}{B} \sum_{i\in \Scal_j'} (\phi(x_i,\theta^\ast) - y_i) \frac{\partial \phi(x_i,\theta^\ast)}{\partial V} + g(x_i,y_i) V^\ast = 0\,,\quad j = 1,2\,.
    \end{equation}
    Consequently, we have
    \begin{equation}\label{eqn:minibatch_diff}
    (\phi(x_{i_1},\theta^\ast) - y_{i_1}) \frac{\partial \phi(x_{i_1},\theta^\ast)}{\partial V} + g(x_{i_1},y_{i_1}) V^\ast = (\phi(x_{i_2},\theta^\ast) - y_{i_2}) \frac{\partial \phi(x_{i_2},\theta^\ast)}{\partial V} + g(x_{i_2},y_{i_2}) V^\ast  \,.
    \end{equation}
    We further choose $i_1$ and $i_2$ such that $g(x_{i_1},y_{i_1}) \neq g(x_{i_2},y_{i_2})$. Then we can solve for $V^\ast$,
    \[
    V^\ast = -\frac{(\phi(x_{i_1},\theta^\ast) - y_{i_1}) \frac{\partial \phi(x_{i_1},\theta^\ast)}{\partial V} - (\phi(x_{i_2},\theta^\ast) - y_{i_2}) \frac{\partial \phi(x_{i_2},\theta^\ast)}{\partial V}}{g(x_{i_1},y_{i_1}) - g(x_{i_2},y_{i_2})} \,.
    \]
    We apply Lemma \ref{lem:rankone} and conclude that $V^\ast$ is at most rank two if $V^\ast$ is not in the measure zero set. 

    If our choice of data pairs $(x_{i_1},y_{i_1})$ and $(x_{i_2},y_{i_2})$ do not exist. This implies that $g(x_i,y_i) = c>0$ is identical for all $(x_i,y_i)\in \Scal$. Since \Eqref{eqn:minibatch_diff} still holds for our choice of $\Scal_1'$ and $\Scal_2'$ we have
    \[
    (\phi(x_{i_1},\theta^\ast) - y_{i_1}) \frac{\partial \phi(x_{i_1},\theta^\ast)}{\partial V} = (\phi(x_{i_2},\theta^\ast) - y_{i_2}) \frac{\partial \phi(x_{i_2},\theta^\ast)}{\partial V} \,.
    \]
    Since $\Scal_1'$ and $\Scal_2'$ are arbitrary, and so are $i_1$ and $i_2$, we conclude that
    \[
    (\phi(x_{i_1},\theta^\ast) - y_{i_1}) \frac{\partial \phi(x_{i_1},\theta^\ast)}{\partial V} = R
    \]
    for some rank one matrix $R$.
    Then \Eqref{eqn:KKT} implies that $V^\ast = -\frac{1}{c}R$ is a rank one matrix. 
\end{proof}

\begin{proof}[Proof of Theorem \ref{thm:lowrankbias_perturbed}]
    The proof is similar to that of Theorem \ref{thm:lowrankbias}. We choose two $\Scal'_1,\Scal'_2 \subset \Scal$ the same way. Then under Assumption \ref{assump:minibatch_perturbed}, we must have
    \begin{equation}\label{eqn:KKT_perturbed}
        \| \frac{1}{B} \sum_{i\in \Scal_j'} (\phi(x_i,\theta^\ast) - y_i) \frac{\partial \phi(x_i,\theta^\ast)}{\partial V} + g(x_i,y_i) V^\ast \|_F \leq \varepsilon\,,\quad j = 1,2\,.
    \end{equation}
    Consequently, we have
    \begin{equation}\label{eqn:minibatch_diff_perturbed}
    \| (\phi(x_{i_1},\theta^\ast) - y_{i_1}) \frac{\partial \phi(x_{i_1},\theta^\ast)}{\partial V} + g(x_{i_1},y_{i_1}) V^\ast - \left( (\phi(x_{i_2},\theta^\ast) - y_{i_2}) \frac{\partial \phi(x_{i_2},\theta^\ast)}{\partial V} + g(x_{i_2},y_{i_2}) V^\ast \right) \|_F \leq 2B \varepsilon \,.
    \end{equation}
    If there exists $i_1$ and $i_2$ such that $g(x_{i_1},y_{i_1}) \neq g(x_{i_2},y_{i_2})$, then we can derive that
    \[
    \| V^\ast - \tilde{V}^\ast\|_F \leq C \varepsilon \,, \text{ where  } C = \frac{2B }{ |g(x_{i_1},y_{i_1}) -  g(x_{i_2},y_{i_2})| } \,,
    \]
    Here $\tilde{V}^\ast \coloneqq \frac{(\phi(x_{i_1},\theta^\ast) - y_{i_1}) \frac{\partial \phi(x_{i_1},\theta^\ast)}{\partial V} - (\phi(x_{i_2},\theta^\ast) - y_{i_2}) \frac{\partial \phi(x_{i_2},\theta^\ast)}{\partial V}}{g(x_{i_1},y_{i_1}) - g(x_{i_2},y_{i_2})} $ is a matrix with rank at most two due to Lemma \ref{lem:rankone}. 
    
    If for any $i_1$ and $i_2$, $g(x_{i_1},y_{i_1}) = g(x_{i_2},y_{i_2})$, then $g(x_i,y_i)=c$ for all $(x_i,y_i) \in \Scal$. 
    We first notice that \Eqref{eqn:KKT_perturbed} can be simplified to the following.
    \begin{equation}\label{eqn:KKT_perturbed2}
     \| c V^\ast +\frac{1}{B} \sum_{i\in \Scal'} (\phi(x_i,\theta^\ast) - y_i) \frac{\partial \phi(x_i,\theta^\ast)}{\partial V}   \|_F \leq \varepsilon\,,\quad \text{for any } \Scal'\subset \Scal \text{ with }|\Scal'|= B\,.
    \end{equation}
    From \Eqref{eqn:minibatch_diff_perturbed}, we can derive that
    \[
    \|  (\phi(x_{i_1},\theta^\ast) - y_{i_1}) \frac{\partial \phi(x_{i_1},\theta^\ast)}{\partial V}  - \left( (\phi(x_{i_2},\theta^\ast) - y_{i_2}) \frac{\partial \phi(x_{i_2},\theta^\ast)}{\partial V} \right) \|_F \leq 2 B\varepsilon \,.
    \]
    Without loss of generality, we may define $\tilde{V}^\ast \coloneqq  (\phi(x_{i_1},\theta^\ast) - y_{i_1}) \frac{\partial \phi(x_{i_1},\theta^\ast)}{\partial V} $ for any $i_1$, then the above equation implies that $(\phi(x_{i},\theta^\ast) - y_{i}) \frac{\partial \phi(x_{i},\theta^\ast)}{\partial V} $ is within an $2B\varepsilon$-ball of $\tilde{V}^\ast$ for all $i$, so is the average for any minibatch $\Scal'\subset \Scal$ with $|\Scal| = B$. In other words,
    \[
    \| \frac{1}{B} \sum_{i\in \Scal'} (\phi(x_i,\theta^\ast) - y_i) \frac{\partial \phi(x_i,\theta^\ast)}{\partial V}   - \tilde{V}^\ast\|_F \leq 2B\varepsilon \,.
    \]
    Combined this equation with \Eqref{eqn:KKT_perturbed2}, we apply triangle inequality and derive that
    \[
    \| c V^\ast + \tilde{V}^\ast \| \leq (2B+1)\varepsilon \,,
    \]
    which further implies that 
    \[
    \| V^\ast - (-\frac{1}{c}\tilde{V}^\ast) \|_F \leq C\varepsilon \,,
    \]
    where $C =  \frac{2B}{c}>0$ and $(-\frac{1}{c}\tilde{V}^\ast)$ is a rank one matrix.
    
\end{proof}

\begin{proof}[Proof of Proposition \ref{prop:gen_by_pdim}]
    Apply Theorem \ref{thm:gen_by_cn}, we choose $\delta = 4 \Ncal_1(\vep/16,\Fcal,2N) \exp(-N\vep^2/32)$, which implies that
    \[
    \sqrt{\frac{32 \ln(4/\delta)}{N}} \leq \vep \leq \sqrt{\frac{32 \ln(4/\delta)}{N}} + \sqrt{\frac{32 \ln \Ncal_1(\vep/16,\Fcal,2N)}{N}} \,.
    \]
    Apply Theorem \ref{thm:cn_by_pdim}, we have further that
    \[
    \vep \leq C \sqrt{\frac{ \ln(1/\delta)}{N}} + C \sqrt{\frac{\ln \Pdim(\Fcal) + \Pdim(\Fcal) \ln(1/\vep)}{N}} \,.
    \]
    To eliminate the dependence on $\vep$ of second term on the RHS, we plug in $\sqrt{\frac{32 \ln(4/\delta)}{N}} \leq \vep$ and obtain that
    \[
     \vep \leq C \sqrt{\frac{ \ln(1/\delta)}{N}} + C \sqrt{\frac{\ln \Pdim(\Fcal) + \Pdim(\Fcal) \ln(N)}{N}} \,,
    \]
    which completes the proof.
\end{proof}

\begin{proof}[Proof of Lemma \ref{lem:pseudodim}]
    We can treat any $f\in \tilde{\Fcal}(m,n,k)$ as a three layer neural network with width $m$ and the total number of parameters is $nk+mk+m=\Ocal{(m+n)k}$. The conclusion can be obtained by applying Theorem 7 in \cite{bartlett2019nearly}.
\end{proof}

\begin{proof}[Proof of Theorem \ref{thm:genbound}]
    This is a direct consequence of Lemma \ref{lem:pseudodim}, Proposition \ref{prop:gen_by_pdim} and Theorem \ref{thm:lowrankbias}.
\end{proof}

\subsection{Numerical Experiments with different batch size}
\label{sec:appendix_batchsize}

In Figure \ref{fig:CHP_stablerank_bs}, we plot the stable rank $\srankV$ and generalization error for various batch size $B$ in the strong weight decay regime $\mu_V=1$. 
Despite small variations, the stable rank and generalization error remain approximately constant across different batch sizes. Similar results are observed for MNIST datasets in Figure \ref{fig:MNIST_stablerank_batch}.
\begin{figure}[htbp]
    \centering
    \begin{subfigure}[b]{0.4\textwidth}
        \centering
        \includegraphics[width=\textwidth]{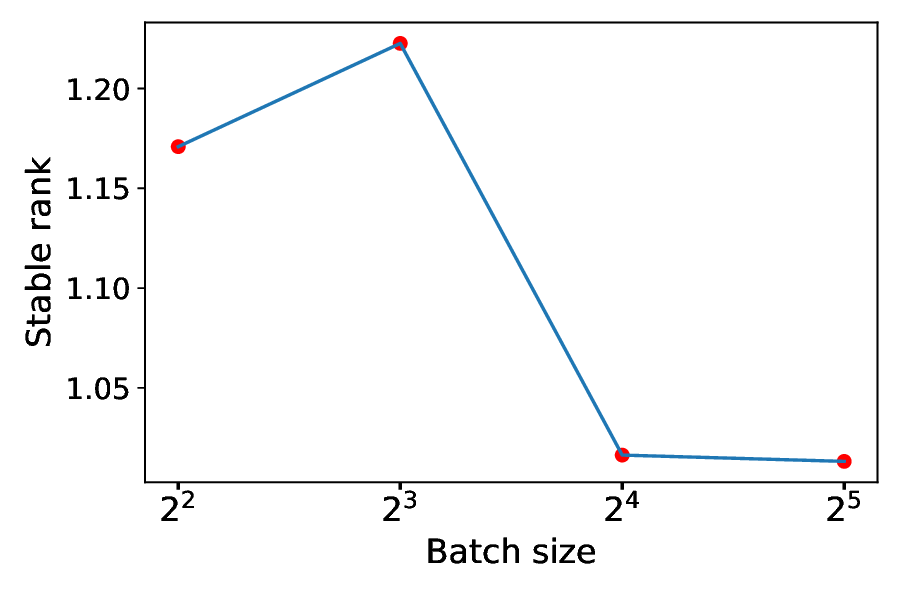}
    \end{subfigure}
    \hfill  
    \begin{subfigure}[b]{0.4\textwidth}
        \centering
         \includegraphics[width=\textwidth]{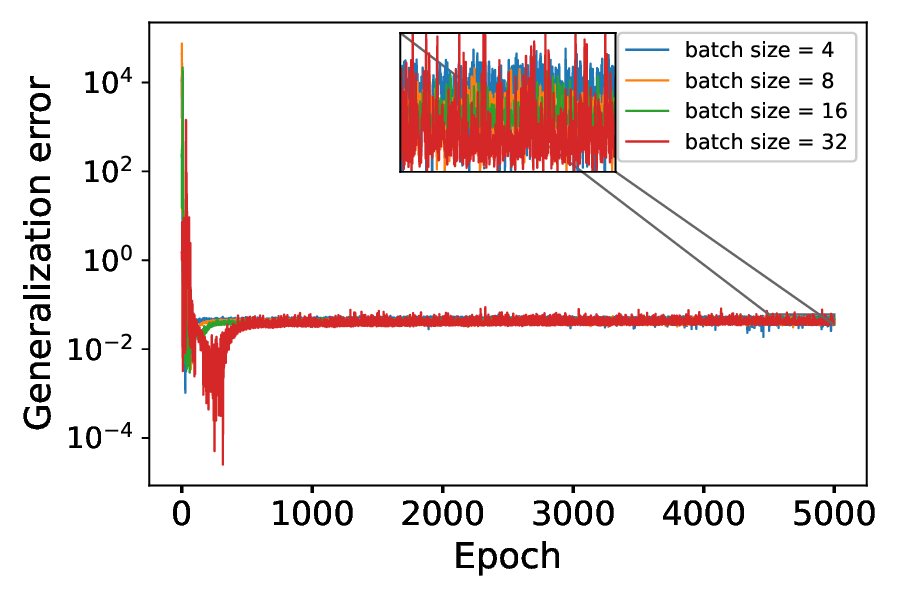}
   
    \end{subfigure}
    \caption{\textbf{California Housing Prices.}\ \textbf{Left: }Stable rank $\srankV$ versus batch size.\   \textbf{Right: }Absolute value of generalization error. Here we fix $\mu_{V}=1$.}
    \label{fig:CHP_stablerank_bs}
\end{figure}

\begin{figure}[htbp]
    \centering
    \begin{subfigure}[htbp]{0.4\textwidth}
        \centering
        \includegraphics[width=\textwidth]{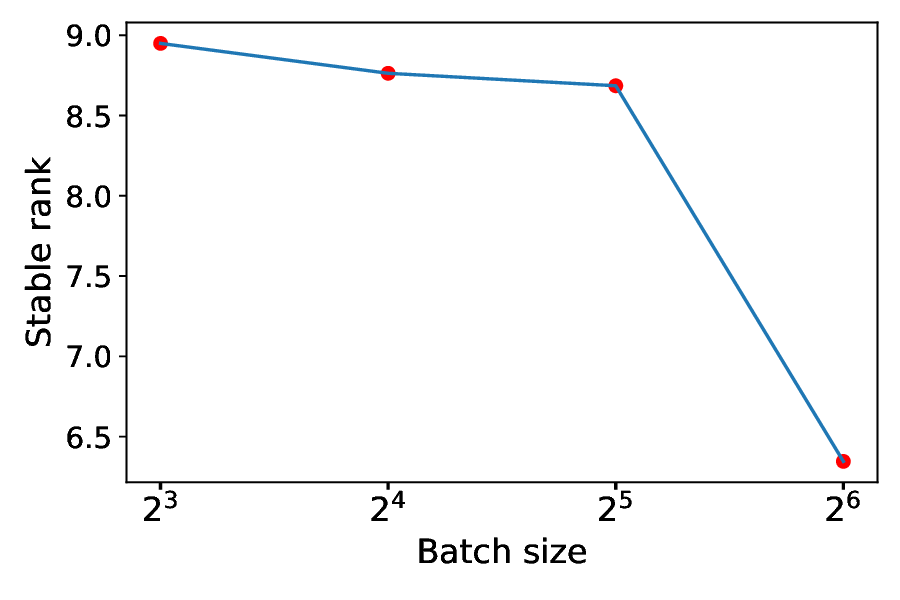}
    \end{subfigure}
    \begin{subfigure}[htbp]{0.4\textwidth}
        \centering
        \includegraphics[width=\textwidth]{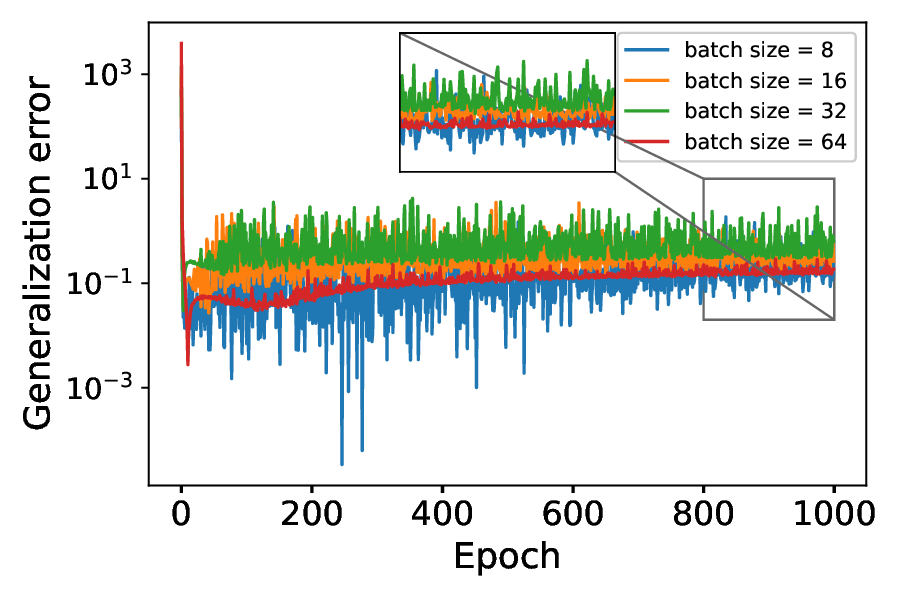}
    \end{subfigure}
    \caption{\textbf{MNIST.}\ \textbf{Left: }Stable rank $\srankV$ versus batch size.\   \textbf{Right: }Absolute value of generalization error. Here we fix $\mu_{V}=1$.}
    \label{fig:MNIST_stablerank_batch}
\end{figure}

\end{document}